\newtheorem{theorem}{Theorem}
\newtheorem{lemma}[theorem]{Lemma}
\newtheorem{corollary}[theorem]{Corollary}
\DeclareMathOperator*{\argmax}{\arg\!\max}
\newcommand{\R}{\mathbb R}
\newcommand{\calP}{\mathcal P}
\newcommand{\calM}{\mathcal M}
\newcommand{\calS}{\mathcal S}
\newcommand{\calX}{\mathcal X}
\newcommand{\calD}{\mathcal D}
\renewcommand*{\vec}[1]{\boldsymbol{#1}}
\newcommand{\calT}{\mathcal T}
\newcommand{\dvol}{{\rm dvol}}
\newcommand{\maps}{{\rm Maps} }
\newcommand{\trii}{ {\rm Tr\ II}}
\newcommand{\pdist}{\calP_{D}}
\newcommand{\pt}{\calP_{\calT_m}}
\newcommand{\pg}{\calP_G}
\newcommand{\grf}{{\rm gr}(\vec f)}
\newcommand{\sign}{\rm sign}
\newcommand{\mdo}{\mathbbm{1}}
\newcommand{\mon}{\mdo_{h_{\vec f_n}(\vec x)\neq y}}
\newcommand{\mpp}{\mdo_{h_{\vec\eta}(\vec x)\neq y}}
\newcommand{\bbo}{\mdo}
\newcommand{\mx}{\mu(\vec x)d\vec x}
\newcommand{\smax}{\rm submax}
\icmltitlerunning{Class Probability Estimation via Differential Geometric Regularization}
\begin{document}

\twocolumn[
\icmltitle{Class Probability Estimation via Differential Geometric Regularization}

\icmlauthor{Qinxun Bai}{qinxun@cs.bu.edu}
\icmladdress{Department of Computer Science, Boston University,
            Boston, MA 02215 USA}
\icmlauthor{Steven Rosenberg}{sr@math.bu.edu}
\icmladdress{Department of Mathematics and Statistics, Boston University,
            Boston, MA 02215 USA}
\icmlauthor{Zheng Wu}{wuzheng@bu.edu}
\icmladdress{The Mathworks Inc. Natick, MA 01760 USA}
\icmlauthor{Stan Sclaroff}{sclaroff@cs.bu.edu}
\icmladdress{Department of Computer Science, Boston University,
            Boston, MA 02215 USA}

\icmlkeywords{boring formatting information, machine learning, ICML}

\vskip 0.3in
]

\begin{abstract}
We study the problem of supervised learning for both binary and multiclass classification from a unified geometric perspective. In particular, we propose a geometric regularization technique to find the submanifold corresponding to a robust estimator of the class probability $P(y|\vec x).$
The regularization term measures the volume of this submanifold, based on the intuition that overfitting produces rapid local oscillations and hence large volume of the estimator.
This technique can be applied to regularize any classification function that satisfies two requirements: firstly, an estimator of the class probability can be obtained; secondly, first and second derivatives of the class probability estimator can be calculated.
In experiments, we apply our regularization technique to standard loss functions for classification, our RBF-based implementation compares favorably to widely used regularization methods for both binary and multiclass classification.
\end{abstract}

\section{Introduction}
\label{sec:intro}

In supervised learning for classification, the idea of regularization seeks a balance between a perfect description of the training data and the potential for generalization to unseen data. Most regularization techniques are defined in the form of penalizing some functional norms. For instance, one of the most successful classification methods, the support vector machine (SVM)~\cite{Vapnik1998,ScholkopfSmola2002} and its variants~\cite{Bartlett2006,Steinwart2005}, use a RKHS norm as a regularizer. While functional norm based regularization is widely-used in machine learning, we feel that there is important local geometric information overlooked by this approach.

In many real world classification problems, if the feature space is meaningful, then all samples that are locally within a small enough neighborhood of a training sample should have class probability $P(y|\vec x)$ similar to the training sample. For instance, a small enough perturbation of RGB values at some pixels of a human face image should not change dramatically the likelihood of correct identification of this image during face recognition. However, such ``small local oscillations" of the class probability are not explicitly incorporated by penalizing commonly used functional norms. For instance, as reported by~\citet{Goodfellow2014}, linear models and their combinations can be easily fooled by hardly perceptible perturbations of a correctly predicted image, even though a $L2$ regularizer is adopted.


Geometric regularization techniques have also been studied in machine learning.~\citet{Belkin2006} employed geometric regularization in the form of the $L2$ norm of the gradient magnitude supported on a manifold. This approach exploits the geometry of the marginal distribution $P(\vec x)$ for semi-supervised learning, rather than the geometry of the class probability $P(y|\vec x)$. Other related geometric regularization methods are motivated by the success of level set methods in image segmentation~\cite{CaiSowmya2007,VarshneyWillsky2010} and Euler's Elastica in image processing~\cite{Lin2012,Lin2015}. In particular, the Level Learning Set~\cite{CaiSowmya2007} combines a counting function of training samples and a geometric penalty on the surface area of the decision boundary. The Geometric Level Set~\cite{VarshneyWillsky2010} generalizes this idea to standard empirical risk minimization schemes with margin-based loss.
Along this line, the Euler's Elastica Model~\cite{Lin2012,Lin2015} proposes a regularization technique that penalizes both the gradient oscillations and the curvature of the decision boundary.
However, all three methods focus on the geometry of the decision boundary supported in the domain of the feature space, and the ``small local oscillation" of the class probability is not explicitly addressed.


In this work, we argue that the ``small local oscillation" of the class probability actually lies in the product space of the feature domain and the probabilistic output space, and can be characterized by the geometry of a submanifold in this product space corresponding to the class probability. Let $\vec f: \mathcal X\to\Delta^{L-1}$ be a class probability estimator, where $\mathcal X$ is the feature space and $\Delta^{L-1}$ is the probabilistic simplex for $L$ classes. From a geometric perspective, if
we regard $\{(\vec x,\vec f(\vec x))|\vec x\in\mathcal X\}$, the \emph{functional graph} (in the geometric sense) of $\vec f$, as a submanifold in $\mathcal X\times\Delta^{L-1}$, then ``small local oscillations" can be measured by the local flatness of this submanifold.

In our approach, the learning process 
can be viewed as a submanifold fitting problem that is solved by a geometric flow method. In particular, our approach finds a submanifold by iteratively fitting the training samples in a curvature or volume decreasing manner without any {\it a priori} assumptions on the geometry of the submanifold in $\mathcal X\times\Delta^{L-1}$.
We use gradient flow methods to find an optimal direction, i.e. at each step we find the vector field pointing in the optimal direction to move $\vec f$.
As we will see in the next section, this regularization approach naturally handles binary and multiclass classification in a unified way, while previous decision boundary based techniques (and most functional regularization approaches) are originally designed for binary classification, and rely on ``one versus one", ``one versus all" or more efficiently a binary coding strategy~\cite{VarshneyWillsky2010} to generalize to multiclass case.

%

In experiments, a radial basis function (RBF) based implementation of our formulation compares favorably to widely used binary and multiclass classification methods on 
datasets from the UCI repository and real-world datasets including the Flickr Material Database (FMD) and the MNIST Database of handwritten digits.

\vskip 0.1in
In summary, our contributions are:
\begin{itemize}
\item A geometric perspective on overfitting and a regularization approach that exploits the geometry of a robust class probability estimator for classification,
\item A unified gradient flow based algorithm for both binary and multiclass classification that can be applied to standard loss functions, and 
\item A RBF-based implementation that achieves promising experimental results.
\end{itemize}

\section{Method Overview}
\label{sec:overview}

In our work, we propose a regularization scheme that exploits the geometry of a robust class probability estimator and suggest a gradient flow based approach to solve for it. In the follow, we will describe our approach. Related mathematical notation is summarized in Table~\ref{tab:notation}.

\begin{figure}[t]
\vspace{-0.2cm}
\centering
\includegraphics[width=.9\columnwidth]{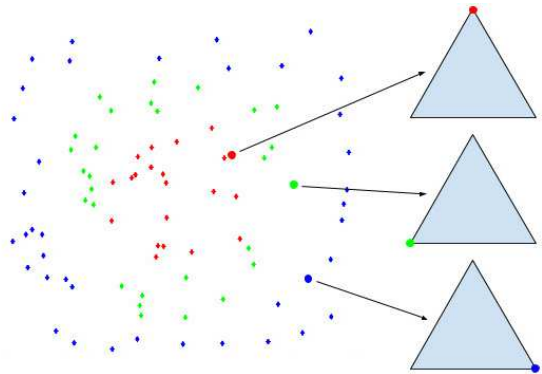}
\vskip -0.1in
\caption{Example of three-class learning, i.e., $L=3$, where the input space  $\mathcal{X}$ is $2d$. Training samples of the three classes are marked with red, green and blue dots respectively.
The class label for each training sample corresponds to a vertex of the simplex $\Delta^{L-1}$.
As a result, each mapped training point $(\vec x_i,\vec z_i)$ lies on one face (corresponding to its label $y_i$) of the space $\cal X\times$$\Delta^2$.
}
\label{fig:3_class}
\vspace{-0.2cm}
\end{figure}

\begin{figure*}[t]
\centering
\begin{tabular}{cccc}
\includegraphics[width=.2\linewidth]{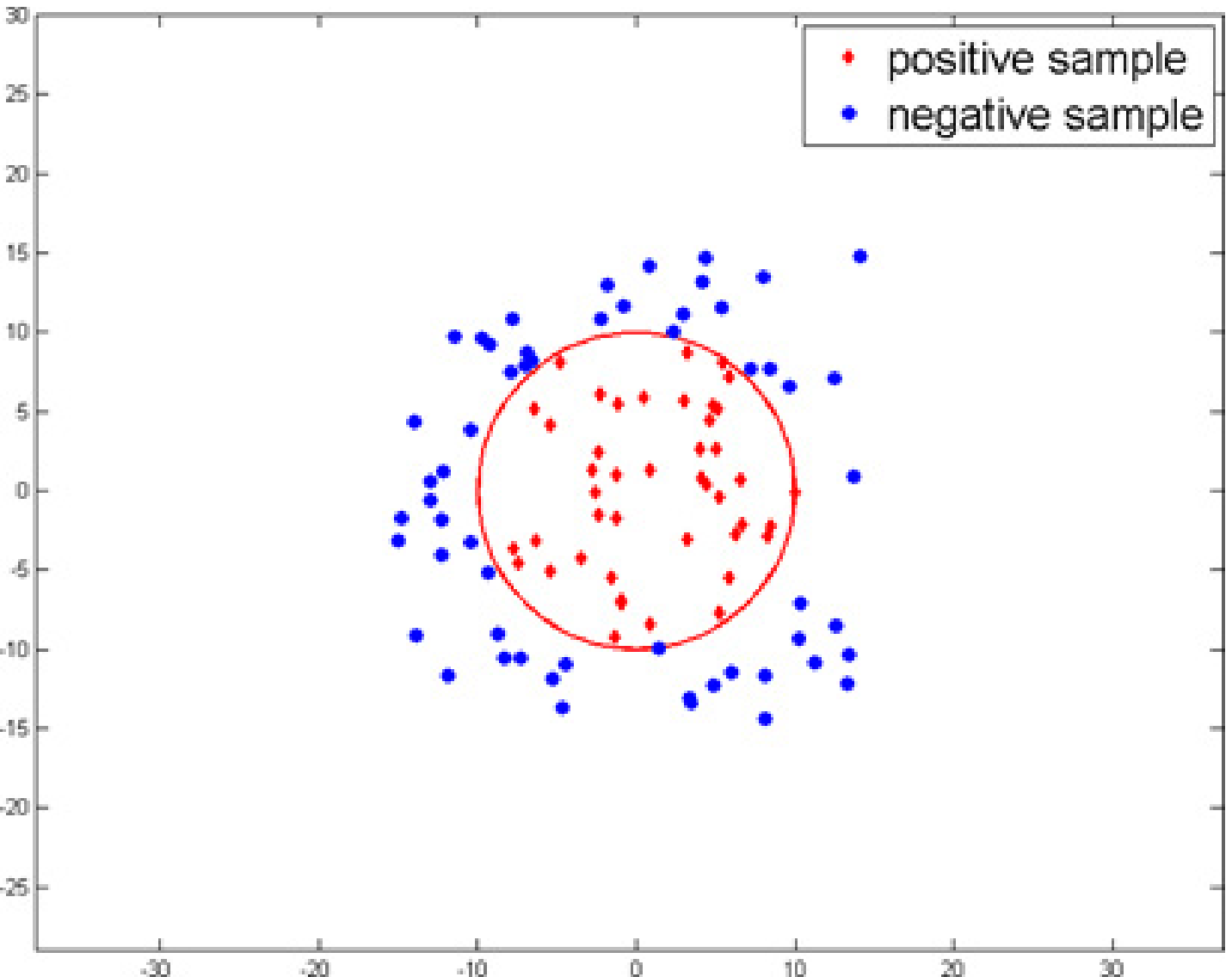}&
\includegraphics[width=.25\linewidth]{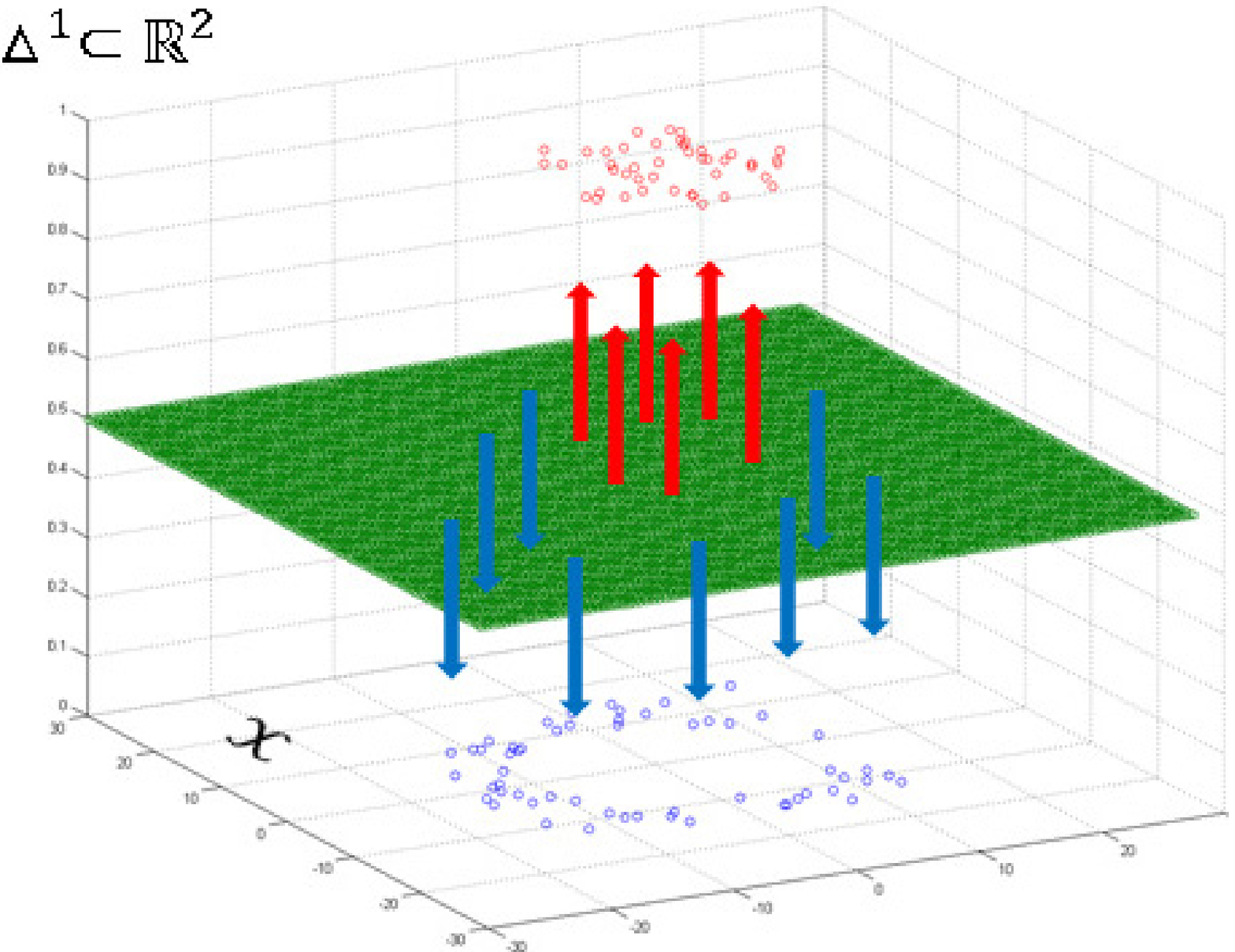}&
\includegraphics[width=.25\linewidth]{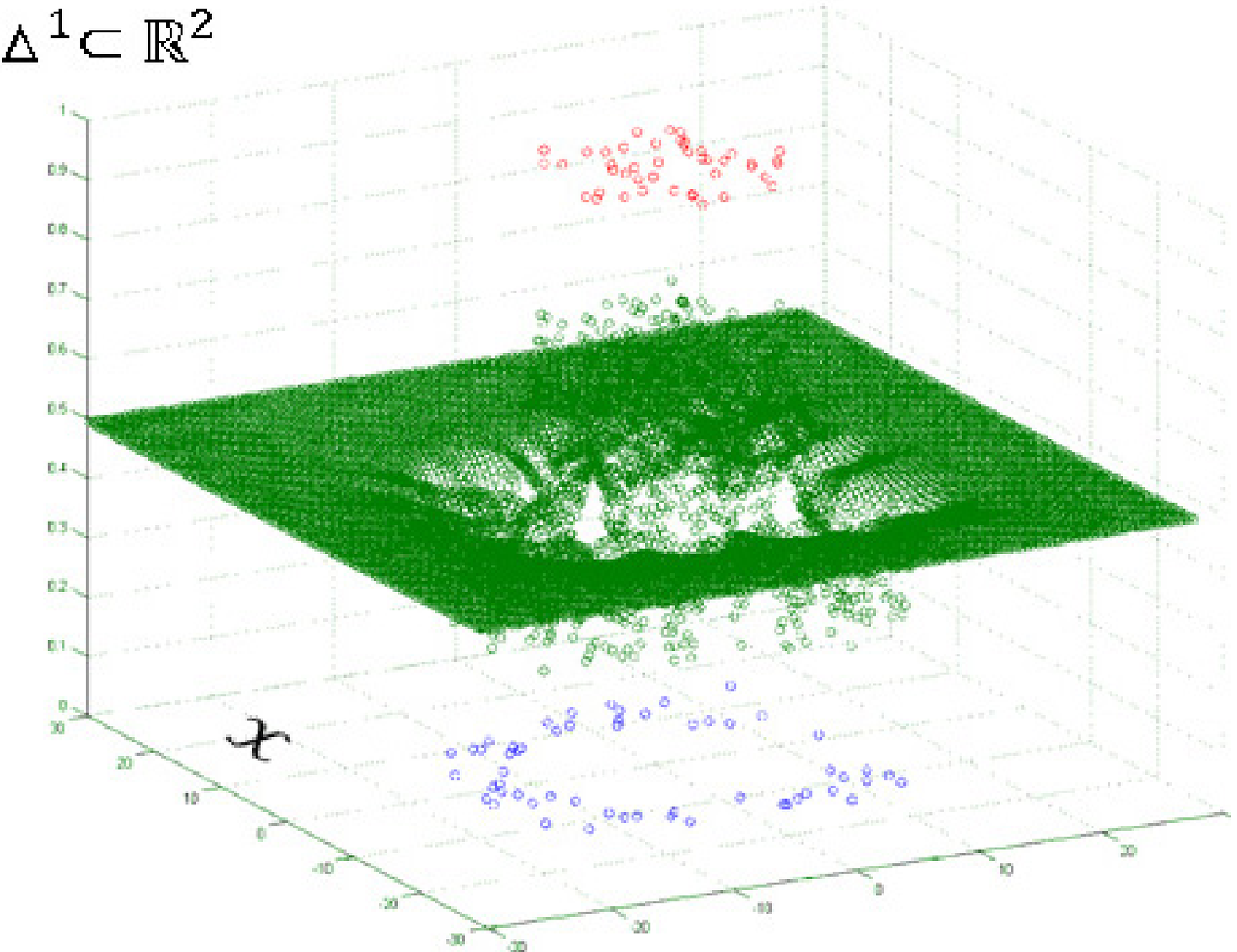}&
\includegraphics[width=.25\linewidth]{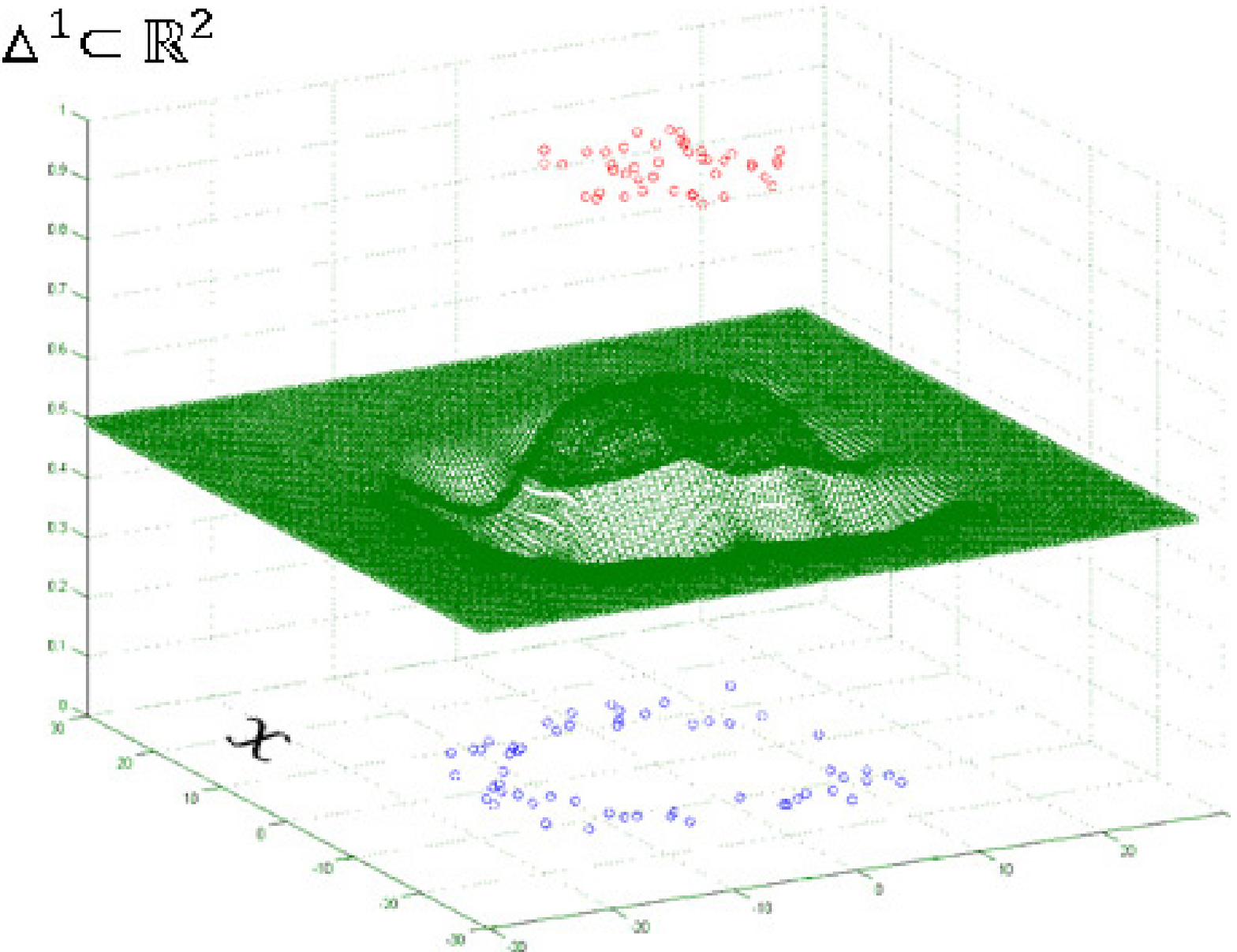}\\
$(a)$ & $(b)$ & $(c)$ & $(d)$\\
\end{tabular}
\caption{Example of binary learning via gradient flow. As shown in $(a),$ the feature space $\cal X$ is $2d$, training points are sampled uniformly within the region $[-15,15]\times [-15,15]$, and labeled by the function $y=\sign(10-\|\vec x\|_2)$ (the red circle). In the initialization step, shown in $(b),$ positive and negative training points map to the two faces of the space $\mathcal X\times\Delta^1$ respectively. Our gradient flow method starts from a neutral function $\vec f_0\equiv\frac{1}{2}$ and moves towards the negative direction (red and blue arrows) of the penalty gradient $\nabla\calP_{\vec f_0}.$ Figure $(c)$ shows the submanifold (${\rm gr}(\vec f_1)$) one step after $(b)$. The submanifold then continues to evolve towards $-\nabla\calP_{\vec f_t}$ step by step and the final output after convergence of the algorithm is shown in $(d).$}
  \label{fig:2_class}
\vspace{-0.4cm}
\end{figure*}

Following the probabilistic setting of classification,
given a sample (feature) space  $\mathcal{X}\subset\R^N$, a  label space $\mathcal{Y}=\{1,\ldots,L\}$, and  a
finite training set of labeled samples $\calT_m = \{(\vec x_i, y_i)\}_{i=1}^m$, where each training sample is generated i.i.d.~from distribution $P$ over $\mathcal X\times\mathcal Y$, our goal is to find a $h_{\calT_m}:\mathcal X\to\mathcal Y$ such that for any new sample $\vec x\in\mathcal X$, $h_{\calT_m}$ predicts its label  $\hat{y}=h_{\calT_m}(\vec x)$.
The optimal generalization risk (Bayes risk) is achieved by the classifier
$h^*(\vec x)=\argmax\{\eta^\ell(\vec x),\ell\in\mathcal Y\}$,
where $\vec\eta=(\eta^1,\ldots,\eta^L)$ with $\eta^\ell:\mathcal X\to[0,1]$ being the $\ell^{\rm th}$ class probability, i.e. $\eta^\ell(\vec x) = P(y=\ell|\vec x)$.

Our regularization approach exploits the geometry of the class probability estimator, and can be regarded as a ``hybrid" plug-in/ERM scheme~\cite{Audibert2007}. A regularized loss minimization problem is setup to find an estimator $\vec f:\mathcal X\to\Delta^{L-1}$, where $\Delta^{L-1}$ is the standard $(L-1)$-simplex in $\R^L$, and $\vec f=(f^1,\ldots,f^L)$  is an estimator of $\vec\eta$ with $f^\ell:\mathcal X\to[0,1]$. The estimator $\vec f$ is then ``plugged-in" to get the classifier
$h_{\vec f}(\vec x)=\argmax\{f^\ell(\vec x),\ell\in\mathcal Y\}.$

Figure~\ref{fig:3_class} shows an example of the setup of our approach, for a synthetic three-class classification problem. The submanifold corresponding to estimator $\vec f$
is the graph (in the geometric sense) of $\vec f$:  $\grf = \{(\vec x,f^1(\vec x),\ldots,f^L(\vec x)):\vec x\in\mathcal X\}\subset\mathcal X\times\Delta^{L-1}$.
We denote a point in the space $\mathcal X\times\Delta^{L-1}$ as $(\vec x,\vec z)=(x^1,\ldots, x^N,z^1,\ldots,z^L)$, where $\vec x\in\mathcal X$ and $\vec z\in\Delta^{L-1}.$
Then in this product space, a training pair $(\vec x_i,y_i=\ell)$ naturally maps to the point
$(\vec x_i,\vec z_i)=(\vec x_i,0,\ldots,1,\ldots,0)$,
with the one-hot vector $\vec z_i$ (with the $1$ in its $\ell$-th slot) at the vertex of
$\Delta^{L-1}$  corresponding to $P(y=y_i|\vec x)=1$.

We point out two properties of this geometric setup. Firstly, it inherently handles multiclass classification, with binary classification as a special case. Secondly, while the dimension of the ambient space, i.e. $\R^{N+L}$, depends on both the feature dimension $N$ and number of classes $L$, the intrinsic dimension of the submanifold $\grf$ only depends on $N$.


\subsection{Variational formulation}

We want $\grf$ to approach the mapped training points while remaining as flat as possible,
so we impose a penalty  on $\vec f$ consisting of an empirical loss term $\pt$ and a geometric regularization term $\pg$.
For $\pt$,
we can choose either the widely-used cross-entropy loss function for multiclass classification or the simpler Euclidean distance function between the simplex coordinates of the graph point and the mapped training point.
For $\pg$, we would ideally consider an $L^2$ measure of the Riemann curvature of $\grf$, as the vanishing of this term gives optimal (i.e., locally distortion free) diffeomorphisms from $\grf$ to $\R^N.$
However, the Riemann curvature tensor takes the form of a combination of derivatives up to third order, and the corresponding gradient vector field is even more complicated and inefficient to compute in practice.
As a result, we measure the graph's volume,
 $\pg(\vec f) = \int_{\grf} \dvol$, where $\dvol$ is the induced volume
 from the Lebesgue measure on the ambient space $\R^{N+L}$.

More precisely, we find the function that minimizes the following penalty $\calP$:
\begin{equation}
\label{eq:variation}
\calP= \pt+\lambda\pg:\calM =\maps(\calX, \Delta^{L-1})\to \R
\end{equation}
on the set $\calM$ of smooth functions from $\calX$ to $\Delta^{L-1}$, where $\lambda$ is the tradeoff parameter between empirical loss and regularization.
It is important to note that any relative scaling of the domain $\mathcal X$
will not affect the estimate of  the class probability $\vec\eta$, as scaling will  distort $\grf$ but will
not change the critical function  estimating $\vec\eta$.

\subsection{Gradient flow and geometric foundation}

The standard technique for solving variational formulas is the Euler-Lagrange PDE. However, due to our geometric term $\pg$, finding the minimal solutions of the Euler-Lagrange equations for $\calP$ is difficult, instead, we solve for ${\rm argmin}\ \calP$ using gradient flow in functional space $\calM.$

A simple but intuitive simulated example of binary learning using gradient flow for our approach is given in Figure~\ref{fig:2_class}.
For the explanation purposes only, we replace $\calM$ with a finite dimensional Riemannian manifold $M$.
Without loss of generality, we also assume that $\calP$ is smooth, then it has a differential $d\calP_{\vec f}
: T_{\vec f}M\to\R$ for each $\vec f\in M$, where $T_{\vec f}M$ is the tangent space to $M$ at $\vec f$.  Since $d\calP_{\vec f}$ is a linear functional on $T_{\vec f}M$, there is a unique tangent vector, denoted $\nabla \calP_{\vec f}$, such that
$d\calP_{\vec f} (\vec v)= \langle \vec v,\nabla \calP_{\vec f}\rangle$ for all $\vec v\in T_{\vec f}M.$  $\nabla \calP_{\vec f}$ points in the direction of maximal increase of $\calP$ at $\vec f$.  Thus, the solution of the negative gradient flow $d\vec f_t/dt = -\nabla
\calP_{\vec f_t}$ is a flow line of steepest descent starting at an initial $\vec f_0.$  For a dense open set of initial points, flow lines approach a local minimum of $\calP$ at $t\to\infty.$
We always choose the initial function $\vec f_0$ to be the ``neutral" choice $\vec f_0(\vec x) \equiv (\frac{1}{L},\ldots,\frac{1}{L})$ which reasonably assigns equal conditional probability to all classes.  

Similar gradient flow procedures are widely used in variational problems, such as level set methods~\cite{OsherSethian1988,Sethian1999}, Mumford-Shah functional~\cite{Mumford1989}, etc. In the classification literature, \citet{VarshneyWillsky2010} were the first to use gradient flow methods for solving level set based energy functions, then followed by~\citet{Lin2012,Lin2015} to solve Euler's Elastica models.
In our case,  we are exploiting the geometry in the space $\mathcal X\times\Delta^{L-1},$ rather than standard vector spaces.

Since our gradient flow method is actually applied on the infinite dimensional manifold $\calM,$ we have to understand both the topology and the Riemannian geometry of $\calM$.  For the topology,
we put the Fr\'echet topology on $\calM' = \maps(\calX,\R^L)$, the set of smooth maps from $\mathcal X$ to $\R^L$, and take the induced topology on $\calM.$  Intuitively speaking, two functions in $\calM$ are close if the functions and all their partial derivatives are pointwise close.
Since $\calM$ is an open Fr\'echet submanifold with boundary inside the vector space $\calM'$, so as with an open set in Euclidean space, we can canonically identify $T_{\vec f}\calM$ with $\calM'$.
For the Riemannian metric, we take the $L^2$ metric on each tangent space $T_{\vec f}\calM$: $\langle \phi_1, \phi_2\rangle := \int_{\calX} \phi_1(\vec x)\phi_2(\vec x) \dvol_{\vec x}$, with $\phi_i\in\calM'$ and $\dvol_{\vec x}$ being the volume form of the induced Riemannian metric on the graph of $\vec f$.
(Strictly speaking, the volume form is pulled back to $\calX$ by $\vec f$, usually denoted by $\vec f^*\dvol$.)

The differential $d\calP_{\vec f}$ is
linear as above, and by a direct calculation,  there is a unique tangent vector $\nabla\calP_{\vec f}\in T_{\vec f}\calM$
such that $d\calP_{\vec f}(\phi) = \langle \nabla\calP_{\vec f}, \phi\rangle$ for all $\phi\in T_{\vec f}\calM.$
Thus, we can construct the gradient flow equation.  However, unlike the case of finite dimensions,
the existence of flow lines is not automatic.  Assuming the existence of flow lines,
 a generic initial point flows to a local minimum of $\calP$.  In any case, our RBF-based implementation in \S\ref{sec:method} mimicking gradient flow is well defined.

Note that we think of $\mathcal X$ as
large enough so that the training data actually is sampled well inside $\mathcal X$.  This allows us to treat $\mathcal X$ as a closed manifold in our gradient calculations, so that boundary effects can be ignored. A similar \emph{natural boundary condition} is also adopted by previous work~\cite{VarshneyWillsky2010,Lin2012,Lin2015}.

\subsection{More on related work}
\label{sec:related}

There exist some other works that are related to some aspects of our work. Most notably, Sobolev regularization, involves functional norms of a certain number of derivatives of the prediction function. For instance, the manifold regularization~\cite{Belkin2006} mentioned in~\S\ref{sec:intro} uses a Sobolev regularization term,
\begin{equation}
\label{eq:mfld_reg}
\int_{x\in\calM}\|\nabla_{\calM} f\|^2dP(x),
\end{equation}
\vskip -0.1in
where $f$ is a smooth function on manifold $\calM.$ A discrete version of~\eqref{eq:mfld_reg} corresponds to the graph Laplacian regularization~\cite{Zhou2005}.~\citet{Lin2015} discussed in detail the difference between a Sobolev norm and a curvature-based norm for the purpose of exploiting the geometry of the decision boundary.

For our purpose, while imposing, say, a high Sobolev norm\footnote{``High Sobolev norm" is the conventional term for Sobolev norm with high order of derivatives.}, will also lead to a flattening of the hypersurface  $\grf$, these norms are not
specifically tailored to measuring the flatness of   $\grf$. In other words, a high Sobolev norm bound will imply the volume bound we desire, but not {\em vice versa}. As a result, imposing high Sobolev norm constraints (regardless of computational difficulties) overshrinks the hypothesis space from a learning theory point of view. In contrast, our regularization term (given in~\eqref{eq:pgv}) involves only the combination of first derivatives of $\vec f$  that specifically address the geometry behind the ``small local oscillation" prior observed in practice.

Our training procedure for finding the optimal graph of a function is, in a general sense, also related to the manifold learning problem~\cite{Tenenbaum2000,RoweisSaul2000,BelkinNiyogi2003,DonohoGrimes2003,ZhangZha2005,LinZha2008}. The most closely related work is~\cite{DonohoGrimes2003}, which seeks a flat submanifold of Euclidean space that contains a dataset. Again, there are key differences.
Since the goal of~\cite{DonohoGrimes2003} is dimensionality reduction, their manifold has high codimension, while our functional graph has codimension $L-1$, which may be as low as $1$.  More importantly,
we do not assume that the graph of our target function is
a flat (or volume minimizing) submanifold, and we instead flow towards a function whose graph is as flat (or volume minimizing) as possible.
In this regard, our work is related to a large body of literature on Morse theory in finite
and infinite dimensions,
and on mean curvature
flow~\cite{ChenGigaGoto, Mantegazza}.

\section{Example Formulation: RBFs}
\label{sec:method}

We now illustrate our approach using an RBF representation of our estimator $\vec f.$
RBFs are also used by previous geometric classification methods~\cite{VarshneyWillsky2010,Lin2012,Lin2015}.

Given values of $\vec f$ are probabilistic vectors, it is common to represent $\vec f$ as a ``softmax" output of RBFs, i.e.
\begin{eqnarray}
\label{eq:softmax}
f^j = \frac{e^{h^j}}{\sum_{l=1}^L e^{h^l}},&&\text{where }
h^j=\sum_{i=1}^m a_i^j\varphi_i(\vec x), \nonumber\\
&&\text{for } j=1,\ldots,L,
\end{eqnarray}
where $\varphi_i(\vec x)=e^{-\frac{1}{c}\|\vec x-\vec x_i\|^2}$ is the RBF function centered at training sample $\vec x_i$, with kernel width parameter $c$.

Estimating $\vec f$ becomes an optimization problem for the $m\times L$ coefficient matrix $A=(a_i^{\ell})$.  The following equation determines $A$:
\begin{equation}
\label{eq:A}
\left[\vec h(\vec x_1),\ldots,\vec h(\vec x_m)\right]^T=GA,\
\text{where } G_{ij}=\varphi_j(\vec x_i).
\end{equation}
To plug this RBF representation into our gradient flow scheme, the gradient vector field $\nabla\calP_{\vec f}$ is evaluated at each sample point $\vec x_i$, and $A$ is updated by
\begin{equation}
\label{eq:upd_A}
A\leftarrow A-\tau G^{-1}\left[\nabla\calP_{\vec h}(\vec x_1),\ldots,\nabla\calP_{\vec h}(\vec x_m)\right]^T,
\end{equation}
where $\tau$ is the step-size parameter, and
\begin{equation}
\label{eq:dfdh}
\nabla\calP_{\vec h}(\vec x_i)
= \left[\frac{\partial\vec f}{\partial\vec h}\right]_{\vec x_i}^T\nabla\calP_{\vec f}(\vec x_i).
\end{equation}
Here $\nabla\calP_{\vec h}(\vec x_i)$ denotes the gradient vector field w.r.t. $\vec h$ evaluated at $\vec x_i,$
and
the $L\times L$ Jacobian matrix $\left[\frac{\partial\vec f}{\partial\vec h}\right]_{\vec x_i}$ can be obtained in closed form from~\eqref{eq:softmax}. In the following subsections, we give exact forms of the empirical penalty $\pt$ and the geometric penalty $\pg$, and discuss the computation of $\nabla\calP_{\vec h}$ for both penalty terms.

\subsection{The empirical penalty $\pt$}
\label{sec:distterm}

We consider two widely-used loss functions for the empirical penalty term $\pt.$

\noindent {\bf Quadratic loss.}
Since $\pt$ measures the deviation of $\grf$ from the mapped training points, it is natural to choose the quadratic function of the Euclidean distance in the simplex $\Delta^{L-1},$
\begin{equation}
\label{eq:pt1}
\pt(\vec f)= \sum_{i=1}^m  \|\vec f(\vec x_i) - \vec z_i \|^2,
\end{equation}
where $\vec z_i$ is the one-hot vector corresponding to the ground truth label of $\vec x_i$.
The gradient vector w.r.t. $\vec f$ evaluated at $\vec x_i$ is
\begin{equation*}
\nabla\calP_{\calT_m,\vec f}(\vec x_i)
= 2(\vec f(\vec x_i) - \vec z_i).
\end{equation*}
The gradient vector w.r.t. $\vec h$ evaluated at $\vec x_i$ is
\begin{equation}
\label{eq:vt1}
\nabla\calP_{\calT_m,\vec h}(\vec x_i) =
2\left[\frac{\partial\vec f}{\partial\vec h}\right]^T_{\vec x_i}(\vec f(\vec x_i) - \vec z_i),
\end{equation}
evaluation of $\left[\frac{\partial\vec f}{\partial\vec h}\right]^T_{\vec x_i}$ is the same as in~\eqref{eq:dfdh}.

\noindent {\bf Cross-entropy loss.}
The cross-entropy loss function is also widely-used for probabilistic output in classification,
\begin{equation}
\label{eq:pt2}
\pt(\vec f) = -\sum_{i=1}^m\sum_{\ell=1}^L z_i^{\ell}\log f^{\ell}(\vec x_i),
\end{equation}
whose  gradient vector field w.r.t. $\vec h$ evaluated at $\vec x_i$ is
\begin{equation}
\label{eq:vt2}
\nabla\calP_{\calT_m,\vec h}(\vec x_i) = \vec f(\vec x_i) - \vec z_i.
\end{equation}

\subsection{The geometric penalty $\pg$}
\label{sec:geoterm}
As discussed in \S\ref{sec:overview}, we wish to penalize graphs for excessive curvature
and we use the following function, which measures the volume of the $\grf$:
\begin{equation}
\label{eq:pgv}
\pg(\vec f) = \int_{\grf}\dvol = \int_{\grf}\sqrt{\det(g)}dx^1\ldots dx^N,
\end{equation}
where $g = (g_{ij})$ with $ g_{ij} = \delta_{ij} + f^a_if^a_j,$ is the Riemmanian metric on $\grf$ induced from the standard dot product on $\R^{N+L}.$ We use the summation convention on repeated indices.
Note that this regularization term is clearly very different from the standard Sobolev norm of any order.

It is standard that
$\nabla\pg = -\trii\in \R^{N+L}$ on the space of all embeddings of $\mathcal X$ in $\R^{N+L}$.
If we restrict to the submanifold of graphs of $\vec f\in \calM'$, it is easy to calculate that the gradient of geometric penalty~\eqref{eq:pgv} is
\begin{equation}
\label{eq:vgf}
\nabla\calP_{G,\vec f} = V_{G,\vec f} = -\trii^L,
\end{equation}
where $\trii^L$ denotes the last $L$ components of $\trii.$ Then the geometric gradient w.r.t. $\vec h$ is
\begin{equation}\label{eq:vg}
\nabla\calP_{G,\vec h} = V_{G,\vec h}
= -\left[\frac{\partial\vec f}{\partial\vec h}\right]^T\trii^L.
\end{equation}
Evaluation of $\left[\frac{\partial\vec f}{\partial\vec h}\right]$ and $\trii^L$ at $\vec x_i$ leads to $\nabla\calP_{G,\vec h}(\vec x_i).$

The formulation given above is general in that it encompasses both the binary and the multiclass cases. For both cases, evaluation of $\left[\frac{\partial\vec f}{\partial\vec h}\right]$ at the training points is the same as that in~\eqref{eq:dfdh}, and evaluation of $\trii^L$ at any point $\vec x$ can be performed explicitly by the following theorem.

\vskip 0.15in
\begin{theorem}
\label{thm:multiclass}
For $\vec f:\R^N\to \Delta^{L-1}$, $\trii^L$\ for\ $\grf$ is given by
\begin{eqnarray}
\label{eq:trii}
\trii^L
&=& (g^{-1})^{ij}
\biggl( f^1_{ji} - (g^{-1})^{rs}  f^a_{rs}f^a_i f^1_j,\ldots, \nonumber \\
 && f^L_{ji} - (g^{-1})^{rs}  f^a_{rs}f^a_i f^L_j\biggr),
\end{eqnarray}
where $f_i^a, f_{ij}^a$ denote partial derivatives of $f^a$.
\end{theorem}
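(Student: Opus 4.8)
The plan is to compute the mean curvature vector $\trii$ of $\grf$ by the standard extrinsic recipe and then read off its last $L$ coordinates. Working at a fixed point, parametrize $\grf$ by the embedding $\Phi(\vec x)=(x^1,\ldots,x^N,f^1(\vec x),\ldots,f^L(\vec x))\in\R^{N+L}$. First I would record the coordinate tangent frame $\partial_i\Phi=(e_i;\,f^1_i,\ldots,f^L_i)$, where $e_i$ is the $i$-th standard basis vector of $\R^N$; taking Euclidean dot products immediately reproduces the induced metric $g_{ij}=\delta_{ij}+f^a_if^a_j$ already recorded in~\eqref{eq:pgv}. The key simplification is that $x^1,\ldots,x^N$ are linear in $\vec x$, so the second-derivative vectors are purely vertical: $\partial_i\partial_j\Phi=(0,\ldots,0;\,f^1_{ij},\ldots,f^L_{ij})$.

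Next I would apply the Gauss formula in orthogonal-projection form: for any $W\in\R^{N+L}$ the tangential part is $W^\top=(g^{-1})^{rs}\langle W,\partial_r\Phi\rangle\,\partial_s\Phi$, so the (vector-valued) second fundamental form is $\mathrm{II}_{ij}=(\partial_i\partial_j\Phi)^\perp=\partial_i\partial_j\Phi-W^\top$ with $W=\partial_i\partial_j\Phi$, and $\trii=(g^{-1})^{ij}\mathrm{II}_{ij}$ is the mean curvature vector entering $\nabla\pg=-\trii$. Because $\partial_i\partial_j\Phi$ has no horizontal component, $\langle\partial_i\partial_j\Phi,\partial_r\Phi\rangle=f^a_{ij}f^a_r$, giving $\trii=(g^{-1})^{ij}\bigl(\partial_i\partial_j\Phi-(g^{-1})^{rs}f^a_{ij}f^a_r\,\partial_s\Phi\bigr)$. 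I would then take the last $L$ entries of each piece --- those of $\partial_i\partial_j\Phi$ are $(f^1_{ij},\ldots,f^L_{ij})$ and those of $\partial_s\Phi$ are $(f^1_s,\ldots,f^L_s)$ --- which yields for the $b$-th slot $(g^{-1})^{ij}\bigl(f^b_{ij}-(g^{-1})^{rs}f^a_{ij}f^a_rf^b_s\bigr)$, and close by relabeling the dummy pair $(i,j)\leftrightarrow(r,s)$ in the second term and using $f^b_{ij}=f^b_{ji}$, landing exactly on~\eqref{eq:trii}.

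I do not expect a real obstacle here: once the projection formula is in hand this is a short computation, and the only point that needs care is the index bookkeeping that brings the raw answer into the stated symmetric form. I would still include two consistency checks in the writeup. First, $\grf$ actually lies in the affine hyperplane $\{\sum_a z^a=1\}\subset\R^{N+L}$, but that hyperplane is totally geodesic, so computing $\trii$ inside all of $\R^{N+L}$ is legitimate; consistently, $\sum_af^a\equiv1$ forces $\sum_af^a_{ij}=\sum_af^a_j=0$ and hence $\sum_b\trii^L_b=0$, i.e.\ $\trii^L$ is automatically tangent to $\Delta^{L-1}$, as it must be to represent a variation of $\vec f$. Second, comparing with~\eqref{eq:vgf}, formula~\eqref{eq:trii} says $-\nabla\calP_{G,\vec f}$ has $b$-th component equal to the Laplace--Beltrami operator $\Delta_g f^b$ of the coordinate function $f^b$, which is precisely the classical first-variation-of-area statement $\nabla\pg=-\trii$ invoked just before the theorem --- a good final cross-check.
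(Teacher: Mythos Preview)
Your proposal is correct and follows essentially the same route as the paper: the paper takes the coordinate tangent frame $r_j=\partial_j\Phi$, passes through an orthonormal frame to justify $\trii=(g^{-1})^{jk}P^\nu D_{r_j}r_k$, then computes $D_{r_j}r_k=(0,\ldots,0,f^1_{kj},\ldots,f^L_{kj})$ and applies the same tangential-projection formula $P^Tu=(g^{-1})^{jk}(u\cdot r_j)r_k$ you use. The only cosmetic difference is that you quote the trace formula $\trii=(g^{-1})^{ij}\mathrm{II}_{ij}$ directly rather than deriving it via $BB^T=g^{-1}$; your added consistency checks (total geodesy of the affine hyperplane, $\sum_b\trii^L_b=0$, and the $\Delta_g f^b$ interpretation) are nice extras not in the paper.
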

The proof is in Appendix~\ref{sec:multi_proof}.
Note that for our RBF representation~\eqref{eq:softmax}, the partial derivatives $f_i^a, f_{ij}^a$ can be easily obtained in closed form.

\noindent{\bf Simplex constraint.}
The class probability estimators $\vec f:X\to\Delta^{L-1}$ always takes values in  $\Delta^{L-1}\subset \R^L$.
 While this constraint is automatically satisfied for the flow of the empirical gradient vector formula~\eqref{eq:vt1} and~\eqref{eq:vt2}, it may fail for  the flow of geometric gradient vector formula~\eqref{eq:vgf}. There are two ways to enforce this constraint for the geometric gradient vector field. First,  since our initial
 function $\vec f_0$ takes values at the center of $\Delta^{L-1}$, we can orthogonally project the geometric gradient vector $V_{G,\vec f}$  to $V_{G,\vec f}'$ in the tangent space $Z=\{(y^1,\ldots,y^L)\in\R^L:\sum_{\ell=1}^L y^\ell=0\}$ of the simplex, and then scale
$\tau V_{G,\vec f}'$   ($\tau$ is the  stepsize) to ensure that the range of the new $\vec f_1$ lies in
 $\Delta^{L-1}$. We then iterate.
 More simply, we can select $L-1$ of the $L$ components of $\vec f(\vec x)$,
 call the new function $\vec f':\calX\to R^{L-1}$, 
 and compute the $(L-1)$-dimensional gradient vector $V_{G,\vec f'}$ following~\eqref{eq:vgf} and~\eqref{eq:trii}.
 The omitted component of the desired $L$-gradient vector is determined by $-\sum_{\ell=1}^{L-1}V^\ell_{G,\vec f'}$, by the definition of $Z$. Our implementation reported follows this second
 approach, where we choose  the $(L-1)$ components of $\vec f$ by omitting the component corresponding to the class with least number of training samples.

\subsection{Algorithm summary}
\label{sec:alg}

Algorithm~\ref{alg:classification} gives a summary of the classifier learning procedure.
Input to the algorithm is the training set $\calT_m,$ RBF kernel width $c$, trade-off parameter $\lambda$, and step-size parameter $\tau.$
For initialization, our algorithm first initializes the function values of $\vec h$ and $\vec f$ for every training point,
and then constructs matrix $G$ and solves for $A$ by~\eqref{eq:A}.
In the subsequent steps, at each iteration, our algorithm first evaluates the gradient vector field $\nabla\calP_{\vec h}$ at every training point, then updates coefficient matrix $A$ by~\eqref{eq:upd_A}.
For the overall penalty function $\calP = \calP_{\calT_m}+\lambda\pg$, we compute the total gradient vector field $\nabla\calP_{\vec h}$ evaluated at $\vec x_i$,
\begin{eqnarray}
\label{eq:vtot}
&&\nabla\calP_{\vec h}(\vec x_i) =  \nabla\calP_{\calT_m,\vec f}(\vec x_i)
    + \lambda \nabla\calP_{G,\vec f}(\vec x_i) \\
&=& \left\{ \begin{array}{l}
    \left[\frac{\partial\vec f}{\partial\vec h}\right]^T_{\vec x_i}
    \bigg(2(\vec f(\vec x_i) - \vec z_i)
    -\lambda\trii^L_{\vec x_i}\bigg), \text{ quadratic} \nonumber \\
    \vec f(\vec x_i) - \vec z_i
    -\lambda\left[\frac{\partial\vec f}{\partial\vec h}\right]^T_{\vec x_i}
    \trii^L_{\vec x_i},\quad \text{ cross-entropy}.\end{array}\right.
 \end{eqnarray}
Our algorithm iterates until it converges or reaches the maximum iteration number.

The same algorithm applies to both the quadratic loss and the cross-entropy loss. To evaluate the total gradient vectors $\nabla\calP_{\vec h}(\vec x_i)$ in each iteration, for the quadratic loss, we use~\eqref{eq:vt1} and~\eqref{eq:vg} to compute the total gradient vector ~\eqref{eq:vtot}; for the cross-entropy loss, we use~\eqref{eq:vt2} and~\eqref{eq:vg} instead. The remaining steps of the procedure are exactly the same for both loss functions.

The final predictor learned by our algorithm is given by
\begin{equation}
\label{eq:classifier}
F(\vec x)=\argmax\{f^{\ell}(\vec x),\ell\in\{1,2,\cdots,L\}\}.
\end{equation}


\begin{algorithm}[tbh]
   \caption{Geometric regularized classification}
   \label{alg:classification}
\begin{algorithmic}
   \STATE {\bfseries Input:} training data $\calT_m = \{(\vec x_i, y_i)\}_{i=1}^m$, RBF kernel width $c$, trade-off parameter $\lambda$, step-size $\tau$
   \STATE {\bfseries Initialize:} $\vec h(\vec x_i) = (1,\ldots,1),
   \vec f(\vec x_i) = (\frac{1}{L},\ldots,\frac{1}{L}),$ $\forall i\in\{1,\cdots,m\}$, construct matrix $G$ and solve $A$ by~\eqref{eq:A} \FOR{$t=1$ {\bfseries to} $T$}
   \STATE \begin{itemize}
       \item[--] Evaluate the total gradient vector $\nabla\calP_{\vec h}(\vec x_i)$ at every training point according to~\eqref{eq:vtot}.
       \item[--] Update the $A$ by~\eqref{eq:upd_A}.
       \end{itemize}
   \ENDFOR
   \STATE {\bfseries Output:} class probability estimator $\vec f$ given by~\eqref{eq:softmax}.
\end{algorithmic}
\end{algorithm}

\section{Experiments}
\label{sec:exp}

\begin{table*}[htb]
\caption{Tenfold cross-validation error rate (percent) on four binary and four multiclass classification datasets from the UCI machine learning repository. $(L,N)$ denote the number of classes and input feature dimensions respectively.  We compare both the quadratic loss version (Ours-Q) and the cross-entropy loss version (Ours-CE) of our method with 6 RBF-based classification methods and (or) geometric regularization methods: SVM with RBF kernel (SVM), Radial basis function network (RBN), Level learning set classifier~\cite{CaiSowmya2007} (LLS), Geometric level set classifier~\cite{VarshneyWillsky2010} (GLS), Import Vector Machine~\cite{Zhu2005} (IVM), Euler's Elastica classifier~\cite{Lin2012,Lin2015} (EE).
The mean error rate averaged over all eight datasets is shown in the bottom row. Top performance for each dataset is shown in bold.}
\begin{center}
\begin{small}
\begin{sc}
\begin{tabular}{|l|c|c|c||c|c|c||c|c|}
        \hline
        Dataset$(L,N)$ &  RBN & SVM & IVM & LLS & GLS & EE & Ours-Q & Ours-CE \\
        \hline \hline
        Pima(2,8)     & 24.60 & 24.12 & 24.11 & 29.94 & 25.94 & {\bf 23.33} & 23.98 & 24.51 \\ \hline
        WDBC(2,30)    & 5.79 & 2.81 & 3.16 & 6.50  & 4.40 & {\bf 2.63} & {\bf 2.63} & {\bf 2.63} \\ \hline
        Liver(2,6)    & 35.65 & 28.66 & 29.25 & 37.39 & 37.61 & 26.33 & {\bf 25.74} & 26.31  \\ \hline
        Ionos.(2,34)  & 7.38 & {\bf 3.99}  & 21.73 & 13.11 & 13.67 & 6.55 & 6.83 &  6.26  \\ \hline
        \hline
        Wine(3,13)    & 1.70 & 1.11 & 1.67  & 5.03  & 3.92 & 0.56 & {\bf 0.00} & {\bf 0.00}  \\ \hline
        Iris(3,4)     & 4.67 & {\bf 2.67} & 4.00 &  3.33 &  6.00 & 4.00 & 3.33 & 3.33  \\ \hline
        Glass(6,9)    & 34.50 & 31.77 & {\bf 29.44} &  38.77 & 36.95 & 32.28 & 29.87 & {\bf 29.44} \\ \hline
        Segm.(7,19)   & 13.07  & 3.81 & 3.64 & 14.40 & 4.03 & 8.80 & {\bf 2.47} & 2.73  \\ \hline
        \hline
        all-avg       & 15.92  & 12.37 & 14.63 & 18.56 & 16.57 & 13.06 & {\bf 11.86} & 11.90 \\ \hline
        \end{tabular}
\label{tab:error}
\end{sc}
\end{small}
\end{center}
\end{table*}

To evaluate the effectiveness of the proposed regularization approach, we compare our RBF-based implementation
with two groups of related classification methods. The first group of methods are standard RBF-based methods that use different regularizers than ours. The second group of methods are previous geometric regularization methods.

In particular, the first group includes the Radial Basis Function Network (RBN), SVM with RBF kernel (SVM) and the Import Vector Machine (IVM)~\cite{Zhu2005} (a greedy search variant of the standard RBF kernel logistic regression classifier). Note that both SVM and IVM use RKHS regularizers and the IVM also uses the similar cross-entropy loss as Ours-CE.

The second group includes the Level Learning Set classifier~\cite{CaiSowmya2007} (LLS), the Geometric Level Set classifier~\cite{VarshneyWillsky2010} (GLS) and the Euler's Elastica classifier~\cite{Lin2012,Lin2015} (EE). Note that both GLS and EE use RBF representations and EE also uses the same quadratic distance loss as Ours-Q.

We test both the quadratic loss version (Ours-Q) and the cross-entropy loss version (Ours-CE) of our implementation.


\subsection{UCI datasets}
\label{sec:UCI}

We tested our classification method on four binary classification datasets and four multiclass classification datasets.
Given that~\citet{VarshneyWillsky2010} has covered several methods on our comparing list and their implementation is publicly available,
we choose to use the same datasets as~\cite{VarshneyWillsky2010} and carefully follow the exact experimental setup. 
Tenfold cross-validation error is reported.
%
For each of the ten folds, the kernel-width constant $c$ and tradeoff parameter $\lambda$ are found using fivefold cross-validation on the training folds.
All dimensions of input sample points are normalized to a fixed range $[0,1]$ throughout the experiments.
We select $c$ from the set of values $\{1/2^5,1/2^4,1/2^3,1/2^2,1/2,1,2,4,8\}$ and $\lambda$ from the set of values $\{1/1.5^4,1/1.5^3,1/1.5^2,1/1.5,1,1.5\}$ that minimizes the fivefold cross-validation error.
The step-size $\tau=0.1$ and iteration number $T=5$ are fixed over all datasets. We used the same settings for both loss functions.

Table~\ref{tab:error} reports the results of this experiment. The top performer for each dataset is marked in bold, and the averaged performance of each method over all testing datasets is summarized in the bottom row.
The numbers for RBN, LLS and GLS are copied from Table 1 of~\cite{VarshneyWillsky2010}. Results for SVM and IVM are obtained by running publicly available implementations for SVM~\cite{ChangLin2011} and IVM~\cite{Roscher2012}. Results for EE are obtained by running an implementation provided by the authors of~\cite{Lin2012}. When running these implementations, we followed the same experimental setup as described above and exhaustively searched for the optimal range for the kernel bandwidth and the trade-off parameter via cross-validation.

As shown in the last row of Table~\ref{tab:error}, two versions of our approach are overall the top two performers among all reported methods. In particular, Ours-Q attains top performance on four out of the eight benchmarks, Ours-CE attains top performance on three out of the eight benchmarks.
The performance of the two versions of our method are very close, which shows the robustness of our geometric regularization approach cross different loss functions for classification.
Note that three pairs of comparisons, IVM vs Ours-CE, GLS vs Ours-Q/Ours-CE, and EE vs Ours-Q are of particular interest. We are going to discuss them in detail respectively.

The IVM method of kernel logistic regression uses the same RBF-based implementation and very similar cross-entropy loss as our cross-entropy version Ours-CE, and both methods handle the multiclass case inherently. The main difference lies in regularization, i.e., the standard RKHS norm regularizer vs our geometric regularizer. Ours-CE outperforms IVM on six of the eight benchmars in Table~\ref{tab:error}, and achieves equal performance on one of the remaining two, and is only slightly behind on ``PIMA". The overall superior performance of Ours-CE demonstrates the advantage of the proposed geometric regularization over the standard RKHS norm regularization.

The GLS method uses the same RBF-based implementation as ours and also exploits volume geometry for regularization. As described in~\S\ref{sec:intro}, however, there are key differences between the two regularization techniques. GLS measures the volume of the decision boundary supported in $\mathcal X,$ while our approach measures the volume of a submanifold supported in $\mathcal X\times\Delta^{L-1}$ that corresponds to the class probability estimator. Our regularization technique handles the binary and multiclass cases in a unified framework, while the decision boundary based techniques, such as GLS (and EE), were inherently designed for the binary case and rely on a binary coding strategy to train $\log_2 L$ decision boundaries to generalize to the multiclass case. In our experiments, both Ours-Q and Ours-CE outperform GLS on all the benchmarks we have tested. This demonstrates the effectiveness of exploiting the geometry of the class probability in addressing the ``small local oscillation" for classification.

\begin{table*}[!htbp]
\caption{Notations}
\begin{center}
\begin{small}
\vskip 0.1in
\bgroup
\def\arraystretch{1.2}
\begin{tabular}{|c|}
        \hline
        $h_{\vec f}(\vec x) = \argmax\limits_{\ell\in\mathcal Y}f^\ell(\vec x), : \text{plug-in classifier of}\ \vec f:\mathcal X\to\Delta^{L-1}$    \\
        \hline

        $\Delta^{L-1} : \text{the standard} (L-1)\text{-simplex in}\  \R^L;$  \\
        \hline

        $\vec\eta(\vec x) = (\eta^1(\vec x),\ldots,\eta^L(\vec x)) :
             \text{class probability:}\ \eta^\ell(\vec x) = P(y=\ell|\vec x)$ \\
        \hline

        $\calM : \{\vec f:\calX\to \Delta^{L-1}): \vec f\in C^\infty\}$   \\
        \hline

        $\calM' : \{\vec f:\calX\to \R^L: \vec f\in C^\infty\}$ \\
        \hline

        $T_f\calM : \text{the tangent space to}\ \calM \text{ at some}\ f\in \calM; T_f\calM\simeq\calM'$ \\
        \hline

        $\text{The graph of}\ f\in \calM\ \text{(or}\ \calM'\text{)} : \grf = \{(\vec x, f(\vec x)):\vec x\in \calX\}$\\
        \hline

        $
        g_{ij} =\frac{\partial f}{\partial x^i}\frac{\partial f}{\partial x^j} : \text{The Riemannian metric on}\ \grf \text{induced from the standard dot product on } \R^{N+L}
        $ \\
        \hline

        $(g^{ij}) = g ^{-1}, \ \text{with}\ g = (g_{ij})_{i,j = 1,\ldots,N}$ \\
        \hline

        $
        \dvol = \sqrt{\det(g)}dx^1\ldots dx^N, \text{the volume element on}\ \grf
        $\\
        \hline

        $\{e_i\}_{i=1}^N : \text{a smoothly varying orthonormal basis of the                                          tangent spaces } T_{(\vec x,\vec f(\vec x)}\grf \text{of the graph of }\vec f
         $\\
        \hline

         $
         \begin{array}{rl}
           \trii:&\text{the trace of the second fundamental form of } \grf,
                  \trii\in \R^{N+L} \\
           \trii = \left(\sum_{i=1}^N D_{e_i}e_i\right)^\perp: & \text{with}\ \perp\ \text{the orthogonal projection to the subspace perpendicular to the}\\
                 &\text{tangent space of}\ \grf \text{ and }
                 D_yw \ \text{the directional derivative of}\ w\ \text{in $y$ direction}
        \end{array}$ \\
        \hline

        $
        \trii^L :
        \text{the projection of $\trii$ onto the last $L$ coordinates of $\R^{N+L}$}
        $ \\
        \hline

        $
            \nabla\calP : \text{the gradient vector field of a function}\ \calP:\calM\to\R
                      \text{ on a possibly infinite dimensional manifold}\ \calM
        $ \\
        \hline
\end{tabular}
\egroup
\label{tab:notation}
\end{small}
\end{center}
\vskip -0.2in
\end{table*}

The EE method of Euler's Elastica model uses the same RBF-based implementation and the same quadratic loss as our quadratic loss version Ours-Q. The main difference, again, lies in regularization, i.e., a combination of $1$-Sobolev norm and curvature penalty on the decision boundary vs our volume penalty on the submanifold corresponding to the class probability estimator.
Since EE adopts a combination of sophisticated geometric measures on the decision boundary, which fit specifically the binary case, it achieves top performance on binary datasets. However, as explained in~\S\ref{sec:intro}, the geometry of the class probability for general classification, which is captured by our approach, cannot be captured by decision boundary based techniques. That is the reason why Ours-Q, a general scheme for both the binary and multiclass case, outperforms EE on all four multiclass datasets, while it still achieves top performance on binary datasets.
This again demonstrates our geometric perspective and regularization approach that exploits the geometry of the class probability.

\subsection{Real-world datasets}
\label{sec:FMD}

To test the scalability of our method to high dimensional and large-scale problems, we also conduct experiments on two real-world datasets, i.e., the Flickr Material Database~\cite{FMD} for image classification and the MNIST~\cite{MNIST} Database of handwritten digits.

\noindent {\bf FMD (4096 dimensional).}
The FMD dataset contains 10 categories of images with 100 images per category.
We extract image features using the SIFT descriptor augmented by its feature coordinates, implemented by the VLFeat library \cite{VLFeat}. With this descriptor, Bag-of-visual-words uses 4096 vector-quantized visual words, histogram square rooting, followed by L2 normalization.
We compare our method with an SVM classifier with RBF kernels,
using exactly the same 4096 dimensional feature.
Our method achieves a correct classification rate of $48.8\%$ while the SVM baseline achieves
$46.4\%$.
Note that while recent
works (Qi et al., 2015; Cimpoi et al., 2015) report better
performance on this dataset, the effort focuses on better feature design, not on the classifier itself.
The features used in those works, such as local texture descriptors and CNN features, are more sophisticated.

\noindent {\bf MNIST (60,000 samples).}
The MNIST dataset contains 10 classes ($0\thicksim 9$) of handwritten digits with $60,000$ samples for training and $10,000$ samples for testing. Each sample is a $28\times 28$ grey scale image.
We use $1000$ RBFs to represent our function $\vec f$, with RBF centers obtained by applying K-means clustering on the training set.
Note that our learning and regularization approach still handles all the $60,000$ training samples as described by Algorithm~\ref{alg:classification}.
Our method achieves an error rate of $2.74\%$. While there are many results reported on this dataset, we feel that the most comparable method with our representation is the Radial Basis Function Network with $1000$ RBF units~\cite{Lecun1998}, which achieves an error rate of $3.6\%$.
This experiment shows the potential that our geometric regularization approach scales to larger datasets.


\section{Discussion}


Our geometric regularization approach can also be viewed as a combination of common physical models.
As illustrated in Figure~\ref{fig:3_class} and~\ref{fig:2_class}, each training pair $(\vec x, y)$ corresponds to a point at one of the vertices of the simplex associated with $\vec x.$ As a result, all training data lie on the boundary of the space $\mathcal X\times\Delta^{L-1}$, while the functional graph of a class probability estimator $\vec f$ is a hypersurface (submanifold) in
$\mathcal X\times\Delta^{L-1}.$ An initial estimator without training information corresponds to the  flat hyperplane in the neutral position. In response to the presence of the training data, this neutral hypersurface deforms towards the training data,  
as if attracted by  a gravitational force due to
point masses centered at the training points. Simultaneously, the regularization term forces the hypersurface to remain as flat (or as volume minimizing) as possible, as if in the presence of surface tension.
 Thus this term follows the physics of  soap films and minimal surfaces~\cite{Dierkes1992}.
Geometric flows like the one proposed here are often modeled on physical processes. In our case, the flow can be viewed as a mixed gravity and surface tension physical experiment.


%

\section{Conclusion}
\label{sec:future}

We have introduced a new geometric perspective on regularization for classification that exploits the geometry of a robust class probability estimator. Under this perspective, we propose a general regularization approach that applies to both binary and multiclass cases in a unified way.
In experiments with an example formulation based on RBFs, our implementation achieves favorable results comparing with widely used RBF-based classification methods and previous geometric regularization methods.
While experimental results demonstrate the effectiveness of our geometric regularization technique, it is also important to study convergence properties of this approach from a learning theory perspective. As an initial attempt, we have established Bayes consistency for an easy case of empirical penalty function and details are provided in Appendix~\ref{sec:Bayes}. We will continue this study in the future.



\bibliography{nipspaper}
\bibliographystyle{icml2016}

\clearpage

\appendix

\section{Proof of Theorem~\ref{thm:multiclass}}
\label{sec:multi_proof}

\begin{proof}
For $\vec f:\R^N\to \Delta^{L-1}\subset\R^L$,
$$\{r_j = r_j(\vec x)= (0,\ldots,\overset{j}{1},\ldots,0, f^1_j,\ldots, f^L_j): j = 1,\ldots N\}$$
 is a basis of the tangent space $T_{\vec x}\grf$ to
$\grf$.  Here $f^i_j = \partial_{x^j}f^i.$ Let $\{e_i\}$ be an orthonormal frame of $T_{\vec x}\grf.$
We have
$$e_i = B_i^jr_j$$
for some invertible matrix $B_i^j.$

Define the metric matrix $g$ for the basis $\{r_j\}$ by
$$g = (g_{kj})\ {\rm with}\ g_{kj} = r_k\cdot r_j = \delta_{kj} + f^i_kf^i_j.$$
Then
\begin{eqnarray*}
\delta_{ij} = e_i\cdot e_j = B_i^k B_j^t r_k\cdot r_t=  B_i^k B_j^t  g_{kt}\\
\Rightarrow I = (BB^T) g\Rightarrow BB^T = g^{-1}.
\end{eqnarray*}
Thus $BB^T$ is computable in terms of derivatives of $\vec f$.

Let $D_uw$ be the $\R^{N+L}$ directional derivative of $w$ in the direction $u$.  Then
\begin{eqnarray*}
\trii &=& P^\nu D_{e_i}e_i = P^\nu D_{B_i^j r_j}B_i^k r_k = B_i^j P^\nu D_{ r_j}B_i^k r_k\\
&=& B_i^j P^\nu[  (D_{r_j} B_i^k)r_k] + B_i^jB_i^k D_{r_j}r_k \\
&=& B_i^jB_i^k P^\nu D_{r_j}r_k\\
&=& (g^{-1})^{jk}P^\nu D_{r_j}r_k,
\end{eqnarray*}
since $P^\nu r_k = 0.$

We have
\begin{eqnarray*}
r_k &=& (0,\ldots, 1,\ldots, f^1_k(x^1,\ldots, x^N),\ldots, f^L_k(x^1,\ldots, x^N))\\
 &=& \partial^{\R^{N+L}}_k + \sum_{i=1}^L f^i_{k}\partial^{\R^{N+L}}_{N+i},
\end{eqnarray*}
so in particular, $\partial^{\R^{N+L}}_\ell r_k = 0$ if $\ell>N.$
Thus
$$D_{r_j}r_k = (0,\ldots, \overset{N}{0}, f^1_{kj},\ldots, f^L_{kj}).$$

So far, we have
$$\trii  = (g^{-1})^{jk}P^\nu(0,\ldots, \overset{N}{0}, f^1_{kj},\ldots, f^L_{kj}).$$

Since $g$ is given in terms of derivatives of $\vec f$, we need to write $P^\nu = I - P^T$ in terms of derivatives of $\vec f$.
For any $u\in \R^{N+L}$, we have
\begin{eqnarray*}
P^T u &=& (P^T u\cdot e_i) e_i = (u\cdot B_i^j r_j) B_i^k r_k \\
&=& B_i^jB_i^k (u\cdot r_j) r_k\\
&=& (g^{-1})^{jk}(u\cdot r_j) r_k.
\end{eqnarray*}

Thus
\begin{eqnarray}\label{one} &&\trii \\
&=& (g^{-1})^{jk}P^\nu(0,\ldots, \overset{N}{0}, f^1_{kj},\ldots, f^L_{kj})\\
&=&  (g^{-1})^{jk}(0,\ldots, \overset{N}{0}, f^1_{kj},\ldots, f^L_{kj}) \nonumber\\
 &&- P^T[
(g^{-1})^{jk}(0,\ldots, \overset{N}{0}, f^1_{kj},\ldots, f^L_{kj}) \nonumber]\\
&=& (g^{-1})^{jk}(0,\ldots, \overset{N}{0}, f^1_{kj},\ldots, f^L_{kj}) \nonumber\\
 &&-
(g^{-1})^{jk}[ (g^{-1})^{rs}(0,\ldots, \overset{N}{0}, f^1_{rs},\ldots, f^L_{rs}) \cdot r_j] r_k \nonumber\\
&=&  (g^{-1})^{jk}(0,\ldots, \overset{N}{0}, f^1_{kj},\ldots, f^L_{kj}) \nonumber\\
 &&- (g^{-1})^{jk} (g^{-1})^{rs}  \left(f^i_{rs}f^i_j\right)  r_k \nonumber\\
 &=& (g^{-1})^{ij}
\biggl(0,\ldots,\overset{j}{ -(g^{-1})^{rs}  f^a_{rs}f^a_i },\ldots, 0, \\
 &&f^1_{ji} - (g^{-1})^{rs}  f^a_{rs}f^a_i f^1_j,
 \ldots, f^L_{ji} - (g^{-1})^{rs}  f^a_{rs}f^a_i f^L_j\biggr),\nonumber
\end{eqnarray}
after a relabeling of indices. Therefore, the last $L$ component of $\trii$ are given by
\begin{eqnarray*}
\trii^L
&=& (g^{-1})^{ij}
\biggl( f^1_{ji} - (g^{-1})^{rs}  f^a_{rs}f^a_i f^1_j,\ldots,\\
 &&f^L_{ji} - (g^{-1})^{rs}  f^a_{rs}f^a_i f^L_j\biggr).
\end{eqnarray*}

\end{proof}

\section{An Easy Example with Bayes Consistency}
\label{sec:Bayes}

We now give an example with a loss function that enables easy Bayes consistency proof under some mild initialization assumption. Related notation is summerized in~\S\ref{sec:notation}.

For ease of reading, we change the notation for empirical penalty $\calP_{\calT_m}$ in the Appendix to $\pdist$, i.e., $\calP = \pdist + \lambda\pg.$
$\pdist$ measures the deviation of $\grf$ from the mapped training points, a natural geometric distance penalty term is an $L^2$ distance in $\R^L$ from $\vec f(\vec x)$ to the averaged $\vec z$ component of the $k$-nearest training points:
\begin{equation}
\label{eq:pdist}
\pdist(\vec f)= R_{D, \calT_m, k}(\vec f)=\int_{\mathcal X} d^2\left(\vec f(\vec x), \frac{1}{k}\sum_{i=1}^k\tilde{\vec z}_i\right)d\vec x,
\end{equation}
where $d$ is the Euclidean distance in $\R^L$, $\tilde{\vec z}_i$ is the vector of the last $L$ components
of $(\tilde{\vec x}_i,\tilde{\vec z}_i)=(\tilde{\vec x}_i^1,\ldots,\tilde{\vec x}_i^N,\tilde{\vec z}_i^1,\ldots,\tilde{\vec z}_i^L)$, with $\tilde{\vec x}_i$ the $i^{\rm th}$ nearest neighbor of $\vec x$ in $\calT_m$, and $d\vec x$ is the Lebesgue measure. The gradient vector field is
\begin{equation*}
\nabla (R_{D, \calT_m, k})_{\vec f}(\vec x,\vec f(\vec x)) = \frac{2}{k}\sum_{i=1}^k  (\vec f(\vec x) - \tilde{\vec z}_i).
\end{equation*}
However, $\nabla (R_{D, \calT_m, k})_{\vec f}$ is discontinuous on the set $\calD$ of  points $\vec x$ such that $\vec x$ has equidistant training points among its $k$ nearest neighbors. $\calD$ is the union of $(N-1)$-dimensional
hyperplanes in $\calX$, so $\calD$ has measure zero.
Such points will necessarily exist unless  the last $L$ components
 of the mapped training points are all $1$ or all $0$.
To rectify this, we can smooth out $\nabla (R_{D, \calT_m, k})_{\vec f}$ to a vector field
\begin{equation}
\label{eq:vdist}
V_{D,\vec f,\phi} =    \frac{2\phi(\vec x)}{k}\sum_{i=1}^k  (\vec f(\vec x) - \tilde{\vec z}_i).
\end{equation}
Here $\phi(\vec x)$ is a smooth damping function close to the singular function $\delta_{\calD}$, which has $\delta_{\calD}(\vec x) = 0$ for $\vec x\in \calD$ and $\delta_{\calD}(\vec x) = 1$ for $\vec x\not\in \calD$.
Outside any open neighborhood of $\calD$,  $\nabla R_{D, \calT_m, k}=
 V_{D,\vec f,\phi}$ for $\phi$ close enough to $\delta_{\calD}.$

Recall the geometric penalty from the submission, i.e., $\pg(\vec f) = \int_{\grf}\dvol,$ with the geometric gradient vector field being $V_{G,\vec f}  = -\trii^L.$

Then the gradient vector field $V_{tot, \lambda, m, \vec f, \phi}$ of this example penalty $\calP$ is,
\begin{eqnarray}
\label{eq:vtot}
V_{tot, \lambda, m, \vec f, \phi} &=& \nabla\calP_f =  V_{D,\vec f,\phi} + \lambda V_{G,\vec f} \nonumber \\
&=&
\frac{2\phi(\vec x)}{k}\sum_{i=1}^k  (\vec f(\vec x) - \tilde{\vec z}_i)
-  \lambda\trii^L.
 \end{eqnarray}

\subsection{Consistency analysis}
\label{sec:slt}

For a training set $\calT_m$,
we let $\vec f_{\calT_m}=(f^1_{\calT_m},\ldots,f^L_{\calT_m})$ be the class probability estimator given by our approach.
We denote the generalization risk of the corresponding plug-in classifier $h_{\vec f_{\calT_m}}$ by $R_P(\vec f_{\calT_m})=\mathbb E_P[\bbo_{h_{\vec f_{\calT_m}}(\vec x)\neq y}]$.
The Bayes risk is defined by $R_P^*=\inf\limits_{h:\mathcal X\to\mathcal Y}R_P(h)=\mathbb{E}_P[\bbo_{h_{\vec \eta}(\vec x)\neq y}]$.
Our algorithm is Bayes consistent if $\lim\limits_{m\to\infty}R_P(\vec f_{\calT_m})=R^*_P$ holds in probability for all distributions $P$ on $\mathcal X\times\mathcal Y$.
Usually, gradient flow methods are applied to a convex functional, so that a flow line approaches the unique global minimum.  If the domain of the functional is an infinite dimensional manifold of (e.g. smooth) functions,
we always assume that flow lines exist and that the actual minimum exists in this manifold.

Because our functionals are not convex, and because we are strictly speaking not working with gradient vector fields, we can only hope to prove Bayes consistency for the set of
 initial estimators in the stable manifold of a stable fixed point (or sink) of the vector field~\cite{GuckenherimerWorfolk}.
Recall that a stable fixed  point $\vec f_0$ has a maximal open neighborhood,
the stable manifold $\calS_{\vec f_0}$, on which flow lines tend towards $\vec f_0$.
For the manifold $\calM$, the stable manifold for a stable critical point of the vector field $V_{tot,\lambda, m, \vec f,\phi}$ is infinite dimensional.

The proof of Bayes consistency for multiclass (including binary) classification follows these steps:
\medskip

\noindent {\bf Step 1:}
$\lim\limits_{\lambda\to 0} R^*_{D,P,\lambda} 
=0.$

\noindent {\bf Step 2:} $\lim\limits_{n\to\infty}R_{D,P}(\vec f_n)= 0
  \Rightarrow\lim\limits_{n\to\infty}R_P(\vec f_n)=R^*_P$.

\noindent {\bf Step 3:}  For all $\vec f\in \calM = \maps(\calX, \Delta^{L-1})$,
$|R_{D,\mathcal T_m}(\vec f)-R_{D,P}(\vec f)|\xrightarrow{m\to\infty} 0$ in probability.
\medskip

Proofs of these steps are provided in following subsections. For the notation see~\S\ref{sec:notation}. $R^*_{D,P, \lambda}$ is the minimum of the regularized $D$ risk
 $R_{D,P,\lambda}(\vec f)$
 for $\vec f$:  $R_{D,P,\lambda}(\vec f)=R_{D,P}(\vec f)+\lambda\pg(\vec f)$, with $R_{D,P}(\vec f)=\int_{\mathcal X} d^2(\vec f(\vec x), \vec\eta(\vec x))
d\vec x$ the $D$-risk. Also, $R_{D,\calT_m,\lambda}(\vec f)=R_{D,\calT_m}(\vec f)+\lambda\pg(\vec f)$, with
$R_{D,\calT_m}(\vec f) = \int_{\mathcal X} d^2\left(\vec f(\vec x), \frac{1}{k}\sum_{i=1}^k\tilde{\vec z}_i\right)d\vec x$ the empirical $D$-risk.
\medskip

\begin{theorem}[Bayes Consistency]
\label{thm:consistency}
Let $m$ be the size of the training data set. Let $\vec f_{1, \lambda,m}\in{\mathcal S}_{\vec f_{D,\calT_m,\lambda}}$, the stable manifold for
the global minimum  $\vec f_{D,\calT_m,\lambda}$ of 
$R_{ D,\calT_m,\lambda}$, and let $\vec f_{n,\lambda,m,\phi}$
be a sequence of functions on the flow line of $V_{tot,\lambda,m,\vec f,\phi}$ starting with $\vec f_{1,\lambda, m}$ with the flow time $t_n\to\infty$ as $n\to \infty.$  Then
$R_P(\vec f_{n,\lambda,m,\phi})\xrightarrow[\lambda\to 0,\phi\to \delta_{\calD}]{m,n\to\infty}R^*_P$ in probability for all distributions $P$ on $\mathcal X\times\mathcal Y$, if $k/m\to 0$ as $m\to\infty$.
\end{theorem}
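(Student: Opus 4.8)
The plan is to assemble Theorem~\ref{thm:consistency} from Steps~1--3 by the classical approximation/estimation decomposition of the $D$-risk: by Step~2 it suffices to prove that $R_{D,P}(\vec f_{n,\lambda,m,\phi})\to 0$ in probability under the stated joint limit, and I will derive this from Steps~1 and~3. The workhorse is the elementary bound, valid for every $\vec f\in\calM$ because $\lambda\pg\ge 0$,
\[
R_{D,P}(\vec f)\ \le\ R_{D,\calT_m,\lambda}(\vec f)\ +\ \bigl|R_{D,P}(\vec f)-R_{D,\calT_m}(\vec f)\bigr|.
\]

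I would first exploit the stable-manifold hypothesis: since $\vec f_{1,\lambda,m}\in\calS_{\vec f_{D,\calT_m,\lambda}}$ and $t_n\to\infty$, the flow line of $V_{tot,\lambda,m,\vec f,\phi}$ converges to the sink as $n\to\infty$, and because $V_{tot,\lambda,m,\vec f,\phi}$ agrees with $\nabla R_{D,\calT_m,\lambda}$ outside an arbitrarily small neighbourhood of the measure-zero set $\calD$ once $\phi$ is near $\delta_{\calD}$, that sink tends to the global minimizer $\vec f_{D,\calT_m,\lambda}$ of $R_{D,\calT_m,\lambda}$; hence, as $n\to\infty$ and $\phi\to\delta_{\calD}$,
\[
R_{D,\calT_m,\lambda}(\vec f_{n,\lambda,m,\phi})\ \longrightarrow\ R^*_{D,\calT_m,\lambda}\ :=\ R_{D,\calT_m,\lambda}(\vec f_{D,\calT_m,\lambda}).
\]
Next I would compare the empirical regularized minimum with the population one: letting $\vec f^*_{P,\lambda}$ minimize $R_{D,P,\lambda}$, minimality of $\vec f_{D,\calT_m,\lambda}$ and $R_{D,\calT_m,\lambda}=R_{D,\calT_m}+\lambda\pg$ give
\begin{align*}
R^*_{D,\calT_m,\lambda}\ &\le\ R_{D,\calT_m}(\vec f^*_{P,\lambda})+\lambda\pg(\vec f^*_{P,\lambda})\\
&\xrightarrow{\ m\to\infty\ }\ R_{D,P}(\vec f^*_{P,\lambda})+\lambda\pg(\vec f^*_{P,\lambda})\ =\ R^*_{D,P,\lambda}\qquad\text{in probability,}
\end{align*}
the convergence being Step~3 for the \emph{fixed} function $\vec f^*_{P,\lambda}$ (this is where $k/m\to0$ enters, through consistency of the kNN target $\tfrac1k\sum_i\tilde{\vec z}_i$ for $\vec\eta$).

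Then I would chain: evaluate the opening bound at $\vec f_{n,\lambda,m,\phi}$, replace the first term on the right using the two preceding limits, and bound the second term by $\sup_{\vec f\in\calM}\bigl|R_{D,P}(\vec f)-R_{D,\calT_m}(\vec f)\bigr|$, which is $o_{\BbP}(1)$ by the \emph{uniform} form of Step~3 (legitimate because $\calM$ takes values in the compact simplex, so the integrands $d^2(\cdot,\cdot)$ are uniformly bounded). This gives $\limsup R_{D,P}(\vec f_{n,\lambda,m,\phi})\le R^*_{D,P,\lambda}$ in probability as $m,n\to\infty$, $\phi\to\delta_{\calD}$; letting $\lambda\to 0$ and invoking Step~1 yields $R_{D,P}(\vec f_{n,\lambda,m,\phi})\to 0$ in probability. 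A diagonal schedule $\lambda=\lambda(m)\to0$, $\phi=\phi(m)\to\delta_{\calD}$, $n=n(m)\to\infty$ makes this a single random sequence, and Step~2 (applied along almost surely convergent subsequences) then delivers $R_P(\vec f_{n,\lambda,m,\phi})\to R^*_P$ in probability for every distribution $P$.

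The main obstacle I expect is twofold. First, the argument needs the \emph{uniform} rather than the pointwise form of Step~3 --- a Glivenko--Cantelli / uniform law of large numbers over $\calM$ (or, more economically, over the $R_{D,\calT_m,\lambda}$-sublevel set that the flow actually sweeps, which has bounded volume $\pg$ and hence controllable complexity), together with uniform consistency of the kNN regression estimate of $\vec\eta$; without this one cannot legitimately evaluate Step~3 at the data-dependent $\vec f_{n,\lambda,m,\phi}$. Second, the limits $m,n\to\infty$, $\phi\to\delta_{\calD}$, $\lambda\to 0$ must be coordinated, so the diagonal extraction (how fast $\lambda,\phi,n$ move with $m$) has to be carried out carefully in this infinite-dimensional setting. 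A lesser subtlety, already noted, is that the flow is driven by the smoothed field $V_{tot,\lambda,m,\vec f,\phi}$ rather than a genuine gradient, so one must verify that off a small neighbourhood of $\calD$ it still decreases $R_{D,\calT_m,\lambda}$ and that its sink approaches $\vec f_{D,\calT_m,\lambda}$ as $\phi\to\delta_{\calD}$.
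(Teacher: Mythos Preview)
Your approach is essentially the same as the paper's: reduce to $R_{D,P}(\vec f_{n,\lambda,m,\phi})\to 0$ via Step~2, then sandwich the regularized empirical minimum between $R_{D,P}$ and $R^*_{D,P,\lambda}$ using Step~3 twice (once at a data-dependent function, once at the fixed $\vec f^*_{P,\lambda}$), and finish with Step~1. The paper's only organizational difference is that it first passes to the limit $n\to\infty$, $\phi\to\delta_{\calD}$ to land on the single function $\vec f_{D,\calT_m,\lambda}$ and runs the inequality chain there, whereas you keep all indices live and chain directly on $\vec f_{n,\lambda,m,\phi}$.

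Your chief stated ``obstacle''---needing a \emph{uniform} version of Step~3 to evaluate it at a data-dependent function---is not actually an obstacle, and you do not need any Glivenko--Cantelli machinery. In the proof of Step~3, writing $\vec a=\vec f(\vec x)-\tfrac1k\sum_i\tilde{\vec z}_i$ and $\vec b=\vec f(\vec x)-\vec\eta(\vec x)$, one has $\vec a-\vec b=\vec\eta(\vec x)-\tfrac1k\sum_i\tilde{\vec z}_i$, which does not involve $\vec f$ at all; together with the simplex bound $\max(|a_\ell|,|b_\ell|)\le 1$ this gives
\[
\sup_{\vec f\in\calM}\bigl|R_{D,\calT_m}(\vec f)-R_{D,P}(\vec f)\bigr|\ \le\ 2\sum_{\ell=1}^L\int_{\calX}\Bigl|\eta^\ell(\vec x)-\tfrac1k\sum_{i=1}^k\tilde z_i^{\,\ell}\Bigr|\,d\vec x,
\]
and the right-hand side is exactly what Stone's theorem drives to zero. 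So Step~3 is already uniform over $\calM$ for free; the paper uses this same fact (without flagging it) when it applies Step~3 to the data-dependent minimizer $\vec f_{D,\calT_m,\lambda}$. With this observation your chain closes with no extra hypotheses, and the remaining limit-coordination and $\phi\to\delta_{\calD}$ points you raise are handled just as in the paper.
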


\begin{proof}
In the notation of~\S\ref{sec:notation}, if $\vec f_{D,\calT_m,\lambda}$ is a global
  minimum for $R_{D,\calT_m,\lambda}$, then outside of $\calD$,
  $\vec f_{D,\calT_m,\lambda}$ is the limit of critical points for the negative flow of
  $V_{tot, \lambda, m, \vec f, \phi}$ as $\phi\to \delta_{\calD}.$
  To see this, fix an
  $\epsilon_i$ neighborhood $\calD_{\epsilon_i}$ of $\calD$.  For a sequence $\phi_j\to\delta_{\calD}, $
$V_{tot, \lambda, m, f, \phi_j}$ is     independent of $j \geq j(\epsilon_i)$ on $\calX\setminus \calD_{\epsilon_i}$,
 so we find a function $\vec f_{i}$, a  critical point of $V_{tot, \lambda, m, \vec f, \phi_{j(\epsilon_i)}}$,
 equal to
$\vec f_{D,\calT_m,\lambda}$ on $\calX\setminus \calD_{\epsilon_i}$.  Since any $\vec x\not\in \calD$ lies outside some $\calD_{\epsilon_i}$, the sequence $\vec f_{i}$ converges at $\vec x$ if we let
$\epsilon_i\to 0.$  Thus we can ignore the choice of $\phi$ in our proof, and drop $\phi$ from the notation.

For our algorithm, for fixed $\lambda,m$, we have as above
$\lim\limits_{n\to\infty}\vec f_{n,\lambda, m} = \vec f_{D, \calT_m,\lambda}$, so
$$\lim\limits_{n\to\infty} R_{D, \calT_m, \lambda}(\vec f_{n,\lambda,m}) = R_{D, \calT_m, \lambda}(\vec f_{D, \calT_m,\lambda}),$$
for $\vec f_1\in {\mathcal S}_{\vec f_{D, \calT_m,\lambda}}.$
By Step 2, it suffices
to show $R_{D,P}(\vec f_{D,\mathcal T_m,\lambda})\xrightarrow[\lambda\to 0]{m\to\infty}0$.
\ In probability, we have
$\forall\delta>0,\exists m>0$ such that
\begin{eqnarray*} \lefteqn{
R_{D,P}(\vec f_{D,\mathcal T_m,\lambda})}\\
&\leq&
R_{D,P}(\vec f_{D,\mathcal T_m,\lambda}) + \lambda\pg(\vec f_{D,\mathcal T_m,\lambda})\\
&\leq& R_{D,\mathcal T_m}(\vec f_{D,\mathcal T_m,\lambda}) + \lambda\pg(\vec f_{D,\mathcal T_m,\lambda}) + \frac{\delta}{3} \ \ \text{ (Step 3)}\\
&=& R_{D,\calT_m,\lambda}(\vec f_{D,\mathcal T_m,\lambda}) + \frac{\delta}{3} \\
&\leq& R_{D,\calT_m,\lambda}(\vec f_{D,P,\lambda})
+ \frac{\delta}{3}  \ \ \text{ (minimality of $\vec f_{D,\mathcal T_m,\lambda}$)} \\
&=& R_{D,\mathcal T_m}(\vec f_{D,P,\lambda}) + \lambda\pg(\vec f_{D,P,\lambda}) + \frac{\delta}{3} \\
&\leq& R_{D,P}(\vec f_{D,P,\lambda}) + \lambda\pg(\vec f_{D,P,\lambda}) + \frac{2\delta}{3} \text{ (Step
3)} \\
&=& R_{D,P,\lambda}(\vec f_{D,P,\lambda}) +\frac{2\delta}{3}  = R^*_{D,P,\lambda} +\frac{2\delta}{3} \\
&\leq& \delta, \ \ \text{ (Step 1)}
\end{eqnarray*}
for $\lambda$ close to zero.
Since $R_{D,P}(\vec f_{D,\mathcal T_m,\lambda})\geq 0$, we are done.
\end{proof}

\subsection{Step 1}

 \begin{lemma} (Step 1) $\lim\limits_{\lambda\to 0}R^*_{D,P,\lambda}
  = 0.$ 
 \end{lemma}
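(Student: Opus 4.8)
The plan is to bound $R^*_{D,P,\lambda}$ from above, uniformly in a way that vanishes as $\lambda\to 0$, by exhibiting for each $\e>0$ a \emph{fixed} smooth estimator $\vec f_\e\in\calM$ whose regularized $D$-risk is at most $\e+\lambda C_\e$ with $C_\e$ independent of $\lambda$; since trivially $R^*_{D,P,\lambda}\ge 0$, letting $\lambda\to 0$ and then $\e\to 0$ closes the argument. The obvious candidate minimizer of $R_{D,P}$ is $\vec\eta$ itself, because $R_{D,P}(\vec\eta)=\int_{\calX}d^2(\vec\eta(\vec x),\vec\eta(\vec x))\,d\vec x=0$. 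The catch is that $\vec\eta$ is only measurable (it is a conditional probability), so it need not lie in $\calM$, and the volume $\pg(\vec\eta)$ of its graph need not be finite or even well defined; hence we cannot plug in $\vec\eta$ directly.

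So I would first regularize $\vec\eta$. Recall that $\calX$ is treated as a closed (compact) manifold, so we may apply the heat semigroup $P_{t}$ on $\calX$ (equivalently, a mollification) and set $\vec f_\e:=P_{t_\e}\vec\eta$ for $t_\e>0$ small. Three properties are needed. (i) $\vec f_\e$ is smooth: this is the standard smoothing property of the heat semigroup. (ii) $\vec f_\e$ takes values in $\Delta^{L-1}$: each component stays nonnegative because $P_t$ is positivity preserving, and the components sum to $1$ because $P_t$ is linear and fixes constants, so $\sum_{\ell}f^\ell_\e=P_{t_\e}\!\big(\sum_\ell\eta^\ell\big)=P_{t_\e}1=1$; equivalently, $\vec f_\e(\vec x)$ is an average of values of $\vec\eta$ and $\Delta^{L-1}$ is convex. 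Hence $\vec f_\e\in\calM$. (iii) $R_{D,P}(\vec f_\e)=\|\vec f_\e-\vec\eta\|_{L^2(\calX)}^2\to 0$ as $t_\e\to 0$, since $\vec\eta$ is bounded, hence in $L^2(\calX)$, and its mollifications converge to it in $L^2$; choose $t_\e$ so that $R_{D,P}(\vec f_\e)<\e$.

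Next, because $\vec f_\e$ is smooth and $\calX$ is compact, the induced metric $g_{ij}=\delta_{ij}+f^a_if^a_j$ has bounded continuous entries, so $\sqrt{\det g}$ is bounded on $\calX$ and therefore $\pg(\vec f_\e)=\int_{{\rm gr}(\vec f_\e)}\dvol$ is a finite number; denote it $C_\e$. By the definition of $R^*_{D,P,\lambda}$ as the minimum (infimum) of $R_{D,P,\lambda}$ over admissible estimators,
\[
R^*_{D,P,\lambda}\ \le\ R_{D,P}(\vec f_\e)+\lambda\,\pg(\vec f_\e)\ <\ \e+\lambda C_\e .
\]
Taking $\limsup_{\lambda\to 0}$ gives $\limsup_{\lambda\to 0}R^*_{D,P,\lambda}\le\e$; since $\e>0$ is arbitrary and $R^*_{D,P,\lambda}\ge 0$ for every $\lambda$, we conclude $\lim_{\lambda\to 0}R^*_{D,P,\lambda}=0$.

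\textbf{Main obstacle.} The only substantive point is exactly the one flagged above: one cannot use $\vec\eta$ itself, so the proof rests on producing a smooth, simplex-valued approximant with arbitrarily small $D$-risk and, crucially, \emph{finite} geometric penalty. Compactness of $\calX$ handles the finiteness of $\pg(\vec f_\e)$, convexity of $\Delta^{L-1}$ (plus positivity of the mollifier) handles the range constraint, and $L^2$-density of smooth functions handles the $D$-risk; none of these is deep, but they are what make the otherwise one-line estimate legitimate.
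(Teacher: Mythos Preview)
Your argument is correct and takes a different route from the paper. The paper works under the standing hypothesis (stated at the beginning of the Step~2 subsection but used throughout the consistency analysis) that $\vec\eta$ is smooth, so that $\vec\eta\in\calM$ and $R^*_{D,P,0}=R_{D,P}(\vec\eta)=0$; it then observes that for each fixed $\vec f_0$ the map $\lambda\mapsto R_{D,P,\lambda}(\vec f_0)$ is continuous (indeed affine) and invokes the fact that the infimum over $\vec f_0$ of such a family is continuous in $\lambda$, giving $\lim_{\lambda\to 0}R^*_{D,P,\lambda}=R^*_{D,P,0}=0$. Under that smoothness assumption your mollification is unnecessary: plugging in $\vec\eta$ itself gives the one-line bound $0\le R^*_{D,P,\lambda}\le R_{D,P}(\vec\eta)+\lambda\pg(\vec\eta)=\lambda\pg(\vec\eta)\to 0$, which is in fact the simplest proof. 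What your approach buys is that it works without any regularity assumption on $\vec\eta$: the heat-semigroup smoothing produces a legitimate test function $\vec f_\e\in\calM$ with small $D$-risk and finite volume, and the convexity/positivity checks you give are exactly what is needed to keep the range in $\Delta^{L-1}$. So your proof is more general and more self-contained, while the paper's is shorter given its hypotheses; under those hypotheses the two collapse to the same direct upper bound.
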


 \begin{proof}  After the smoothing procedure in \S3.1 for the distance penalty term, the function
 $R_{D,P,\lambda}:\calM
 \to\R$ is continuous in the Fr\'echet topology on $\calM.$
 We check that the functions $R_{D,P,\lambda}:\calM\to\R$ are equicontinuous
 in $\lambda$: for fixed $\vec f_0\in \calM$ and $\epsilon > 0$, there exists $\delta = \delta(\vec f_0, \epsilon)$
 such that
 $|\lambda -\lambda'| <\delta\Rightarrow |R_{D,P,\lambda}(\vec f_0) - R_{D,P,\lambda'}(\vec f_0)|
 <\epsilon.$
 This is immediate:
 $$|R_{D,P,\lambda}(\vec f_0) - R_{D,P,\lambda'}(\vec f_0)| =
 |(\lambda -\lambda')\pg(\vec f_0)| < \epsilon,$$
 if $\delta < \epsilon/|\pg(\vec f_0)|.$
 It is standard that the infimum $\inf R_\lambda$ of an equicontinuous family of functions is continuous in
 $\lambda$, so
 $\lim\limits_{\lambda\to 0} R^*_{D,P,\lambda} = R^*_{D,P,\lambda = 0}
 = R_{D,P}(\vec\eta)= 0$.  \end{proof}

\subsection{Step 2}


We assume that the class probability function $\vec\eta(\vec x):\R^N\to\R^L$ is smooth, and that the marginal distribution $\mu(\vec x)$ is continuous.  We also let $\mu$ denote the corresponding measure
on $\calX.$
\medskip

\noindent {\bf Notation:}
$$h_{\vec f}(\vec x) = \argmax\{f^{\ell}(\vec x),\ell\in\mathcal Y\}.$$
Of course,
$$\bbo_{h_{\vec f}(\vec x)\neq y} = \left\{ \begin{array}{rr} 1,& h_{\vec f}(\vec x)\neq y,\\
0,& h_{\vec f}(\vec x) = y.\end{array}\right.$$

\begin{lemma} (Step 2 for 
a subsequence)
$$\lim\limits_{n\to\infty}R_{D,P}(\vec f_n)=0
 \Rightarrow\lim\limits_{i\to\infty}R_P(\vec f_{n_i})=R^*_P$$ for
some subsequence $\{\vec f_{n_i}\}_{i=1}^\infty$ of $\{\vec f_n\}.$
\end{lemma}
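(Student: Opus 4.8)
The plan is to exploit the fact that $L^2$-convergence $R_{D,P}(\vec f_n)=\int_{\calX} d^2(\vec f_n(\vec x),\vec\eta(\vec x))\,d\vec x\to 0$ forces a subsequence $\vec f_{n_i}$ that converges to $\vec\eta$ pointwise $\mu$-almost everywhere, and then to pass this pointwise convergence through the plug-in rule $h_{\vec f}=\argmax_\ell f^\ell$ to conclude $R_P(\vec f_{n_i})\to R^*_P$. First I would invoke the standard measure-theory fact (Riesz--Fischer): since $R_{D,P}(\vec f_n)\to 0$, there is a subsequence $\{\vec f_{n_i}\}$ with $\vec f_{n_i}(\vec x)\to\vec\eta(\vec x)$ for $\mu$-a.e.\ $\vec x\in\calX$. (Here I am using that $\mu$ is a finite measure on $\calX$, which holds since $\calX$ is treated as a closed manifold and $\mu$ is continuous.)

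Next I would fix a point $\vec x$ at which $\vec f_{n_i}(\vec x)\to\vec\eta(\vec x)$ and which additionally lies \emph{outside} the ``tie set'' $\calB=\{\vec x: \argmax_\ell \eta^\ell(\vec x)\text{ is not unique}\}$; the key observation is that on $\calX\setminus\calB$ the map $h_{\vec\eta}$ is locally constant in a neighborhood of $\vec x$ in the sense that continuity of the $\argmax$ holds wherever the maximizer is strict. Concretely, if $\eta^{\ell^*}(\vec x)>\eta^{\ell}(\vec x)$ for all $\ell\neq\ell^*$, then for $i$ large enough $f_{n_i}^{\ell^*}(\vec x)>f_{n_i}^{\ell}(\vec x)$ for all $\ell\neq\ell^*$, hence $h_{\vec f_{n_i}}(\vec x)=h_{\vec\eta}(\vec x)$ for all large $i$. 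Therefore $\bbo_{h_{\vec f_{n_i}}(\vec x)\neq y}\to\bbo_{h_{\vec\eta}(\vec x)\neq y}$ pointwise for $\mu$-a.e.\ $\vec x\notin\calB$, and hence (after integrating the label $y$ against $P(y|\vec x)$) the pointwise conditional error rates converge for $\mu$-a.e.\ $\vec x\notin\calB$. I would then argue that $\mu(\calB)$ contributes nothing: either assume (as is standard for consistency statements, and consistent with the smoothness hypothesis on $\vec\eta$) that $\mu(\calB)=0$, or more carefully note that on $\calB$ both $h_{\vec f_{n_i}}$ and $h_{\vec\eta}$ achieve the Bayes-optimal conditional risk (any maximizer of a tied $\vec\eta$ is Bayes-optimal there), so the contribution of $\calB$ to $R_P(\vec f_{n_i})-R^*_P$ is nonnegative and can be absorbed — in fact the cleanest route is the classical bound $R_P(\vec f)-R^*_P\le 2\int_{\calX}\|\vec f(\vec x)-\vec\eta(\vec x)\|_1\,d\mu(\vec x)$ or its $L^2$ analogue via Cauchy--Schwarz, which sidesteps the tie set entirely.

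Finally I would assemble the pieces by dominated convergence: the integrand $\bbo_{h_{\vec f_{n_i}}(\vec x)\neq y}$ is bounded by $1$, $\calX\times\calY$ has finite $P$-measure, and we have $P$-a.e.\ convergence to $\bbo_{h_{\vec\eta}(\vec x)\neq y}$, so $R_P(\vec f_{n_i})=\mathbb E_P[\bbo_{h_{\vec f_{n_i}}(\vec x)\neq y}]\to \mathbb E_P[\bbo_{h_{\vec\eta}(\vec x)\neq y}]=R^*_P$. The main obstacle, and the only place that requires care, is the behavior at the decision-boundary tie set $\calB$ where the $\argmax$ is discontinuous; I expect to handle this either by the measure-zero assumption implicit in the smoothness of $\vec\eta$ and continuity of $\mu$, or — more robustly — by replacing the pointwise $\argmax$ argument with the standard plug-in excess-risk inequality $R_P(\vec f)-R^*_P\le c\,\|\vec f-\vec\eta\|_{L^2(\mu)}$, which makes Step 2 essentially immediate and also explains why one needs only a subsequence (to promote $L^2$ convergence of $\vec f_n$ to $\vec\eta$ from convergence of $R_{D,P}(\vec f_n)$, when $R_{D,P}$ is exactly that $L^2$ distance, is in fact automatic — so the subsequence is only a convenience, and the full Step 2 in the body of the paper presumably removes it).
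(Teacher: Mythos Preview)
Your proposal is correct and shares the paper's core strategy: extract a pointwise-a.e.\ convergent subsequence from $L^2$ convergence, observe that away from the tie set $\calB$ the plug-in label $h_{\vec f_{n_i}}(\vec x)$ eventually equals $h_{\vec\eta}(\vec x)$, and handle the tie set separately. The execution differs: the paper invokes Egorov's theorem to upgrade a.e.\ convergence to uniform convergence off a small set $B_\epsilon$, then introduces an explicit $\delta$-neighborhood $Z_\delta$ of the tie set and runs an $\epsilon$--$\delta$ triangle-inequality argument, whereas you go straight to dominated convergence with the bound $\bbo_{h_{\vec f_{n_i}}(\vec x)\neq y}\le 1$. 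Your route is shorter and loses nothing; the paper's Egorov step is not actually needed for the conclusion. Your alternative via the plug-in excess-risk inequality $R_P(\vec f)-R^*_P\le c\,\|\vec f-\vec\eta\|_{L^2(\mu)}$ is cleaner still and, as you note, dispenses with the subsequence altogether---the paper instead removes the subsequence in a follow-up corollary by appealing to the algorithm-specific fact that the flow produces a pointwise Cauchy sequence.

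One small point you glide over: $R_{D,P}$ is defined with Lebesgue measure $d\vec x$, while $R_P$ is an expectation under the marginal $\mu$. The paper bridges this explicitly (``equivalent \ldots\ since $\calX$ is compact and $\mu$ is continuous''), and you should too: Lebesgue-a.e.\ convergence implies $\mu$-a.e.\ convergence because $\mu$ has a continuous density on compact $\calX$, hence is absolutely continuous with respect to Lebesgue measure. Your parenthetical about $\mu$ being finite is not the relevant property here.
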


\begin{proof}
The left hand side of the Lemma is
$$\int_{\mathcal X} d^2(\vec f_n(\vec x), \vec\eta(\vec x))d\vec x
 \to 0,$$
which is equivalent to
\begin{equation}\label{3.0a}\int_{\mathcal X} d^2(\vec f_n(\vec x), \vec\eta(\vec x))\mx
 \to 0,
 \end{equation}
 since $\calX$ is compact and $\mu$ is continuous.
Therefore,
it suffices to show
\begin{eqnarray}\label{3.3}\lefteqn{\int_\calX d^2(\vec f_n(\vec x), \vec\eta(\vec x))\mx \to 0}\\
&\Longrightarrow &
{\mathbb E}_P[\bbo_{h_{\vec f_n}(\vec x)\neq y} ]
\to {\mathbb E}_P[\bbo_{h_{\vec\eta}(\vec x)\neq y} ].\nonumber
\end{eqnarray}

We recall that $L^2$ convergence implies pointwise convergence a.e, so (\ref{3.0a}) implies that a subsequence of $\vec f_n$, also denoted $\vec f_n$, has
$\vec f_n\to \vec\eta(\vec x)$ pointwise a.e. on $\calX$.  (By our assumption on $\mu(\vec x)$, these statements hold for
either $\mu$ or Lebesgue measure.) By Egorov's theorem, for any $\epsilon >0$, there exists a set
$B_\epsilon \subset \calX$ with $\mu(B_\epsilon)  < \epsilon$ such that $\vec f_n\to \vec\eta(\vec x)$ uniformly on $\calX\setminus
B_\epsilon.$

Fix $\delta>0$ and set
\begin{eqnarray*}
Z_\delta &=& \{\vec x\in \calX: \#\{\underset{\ell\in\mathcal Y}{\rm argmax}\ \eta^\ell(\vec x)\} = 1, \\
&&|\max\limits_{\ell\in\mathcal Y}\eta^\ell(\vec x) -
\underset{\ell\in\mathcal Y}\smax\ \eta^\ell(\vec x)|
< \delta\},
\end{eqnarray*}
where $\underset{\ell\in\mathcal Y}\smax$
 denotes the second largest element in $\{\eta^1(\vec x),\ldots,\eta^L(\vec x)\}$.
For the moment, assume that
$ Z_0 =  \{\vec x\in \calX: \#\{\underset{\ell\in\mathcal Y}{\rm argmax}\ \eta^\ell(\vec x)\} > 1\}$
has $\mu(Z_0) = 0$.

It follows easily\footnote{Let $A_k$ be sets with $A_{k+1}\subset A_k$ and with $\mu(\cap_{k=1}^\infty A_k) = 0.$  If $\mu(A_k)\not\to 0$, then there exists a subsequence, also called $A_k$, with $\mu(A_k) > K > 0$ for some $K$.  We claim
$\mu(\cap A_k) \geq K$, a contradiction.  For the claim, let $Z = \cap A_k$. If $\mu(Z) \geq \mu(A_k)$ for all $k$, we are done.  If not, since the $A_k$ are nested, we can replace $A_k$ by a set, also called $A_k$, of
measure $K$ and such that the new $A_k$ are still nested.  For the relabeled $Z = \cap A_k$,
$Z\subset A_k$ for all $k$, and we may assume $\mu(Z) < K.$   Thus there exists $Z'\subset A_1$ with $Z'\cap Z = \emptyset$ and
$\mu(Z') > 0.$  Since $\mu(A_i) =K$, we must have $A_i\cap Z'\neq \emptyset$ for all $i$.  Thus
$\cap A_i$ is strictly larger than $Z$, a contradiction.  In summary, the claim must hold, so we get a contradiction to assuming $\mu(A_k)\not\to 0$.}  that $\mu(Z_\delta) \to 0$ as $\delta
\to 0.$  On $\calX\setminus (Z_\delta\cup B_\epsilon)$, we have
$\bbo_{h_{\vec f_n}(\vec x)\neq y} = \bbo_{h_{\vec\eta}(\vec x)\neq y}$ for $n> N_\delta$.
Thus
$${\mathbb E}_P[\bbo_{\calX\setminus (Z_\delta\cup B_\epsilon)}
\bbo_{h_{\vec f_n}(\vec x)\neq y} ]
= {\mathbb E}_P[\bbo_{\calX\setminus (Z_\delta\cup B_\epsilon)}
\bbo_{h_{\vec\eta}(\vec x)\neq y} ].$$
(Here $\bbo_A$ is the characteristic function of a set $A$.)

As $\delta\to 0$,
$${\mathbb E}_P[\bbo_{\calX\setminus (Z_\delta\cup B_\epsilon)}\bbo_{h_{\vec f_n}(\vec x)\neq y} ]
\to
{\mathbb E}_P[\bbo_{\calX\setminus B_\epsilon}\bbo_{h_{\vec f_n}(\vec x)\neq y} ].$$
and similarly for $\vec f_n$ replaced by $\vec\eta(\vec x).$
 During this process, $N_\delta$ presumably goes to $\infty$, but that
precisely means
$$\lim_{n\to\infty} {\mathbb E}_P[\bbo_{_{\calX\setminus B_\epsilon}}\bbo_{h_{\vec f_n}(\vec x)\neq y} ]
= {\mathbb E}_P[\bbo_{_{\calX\setminus B_\epsilon}}\bbo_{h_{\vec\eta}(\vec x)\neq y} ].$$

Since
$$\left| {\mathbb E}_P[\bbo_{\calX\setminus B_\epsilon} \bbo_{h_{\vec f_n}(\vec x)\neq y}] -
{\mathbb E}_P[ \bbo_{h_{\vec f_n}(\vec x)\neq y} ] \right| < \epsilon,$$
and similarly for $\vec\eta(\vec x)$, we get
\begin{eqnarray*}\lefteqn{ \left| \lim_{n\to\infty} {\mathbb E}_P[ \bbo_{h_{\vec f_n}(\vec x)\neq y} ]
-{\mathbb E}_P[ \bbo_{h_{\vec\eta}(\vec x)\neq y} ]\right| } \\
&\leq & \left| \lim_{n\to\infty} {\mathbb E}_P[\bbo_{h_{\vec f_n}(\vec x)\neq y}] -
\lim_{n\to\infty} {\mathbb E}_P[\bbo_{\calX\setminus B_\epsilon} \bbo_{h_{\vec f_n}(\vec x)\neq y}]\right|\\
&&\quad
+ \left| \lim_{n\to\infty}{\mathbb E}_P[ \bbo_{\calX\setminus B_\epsilon} \bbo_{h_{\vec f_n}(\vec x)\neq y}] -
{\mathbb E}_P[ \bbo_{\calX\setminus B_\epsilon} \bbo_{h_{\vec\eta}(\vec x)\neq y} ]\right| \\
&& \quad + \left| \lim_{n\to\infty} {\mathbb E}_P[\bbo_{\calX\setminus B_\epsilon} \bbo_{h_{\vec\eta}(\vec x)\neq y}]
- {\mathbb E}_P[  \bbo_{h_{\vec\eta}(\vec x)\neq y}]\right| \\
&\leq & 3\epsilon.
\end{eqnarray*}
(Strictly speaking, $ \lim_{n\to\infty} {\mathbb E}_P[ \bbo_{h_{\vec f_n}(\vec x)\neq y} ]$
is first $\limsup$ and then $\liminf$ to show that the  limit exists.)
Since $\epsilon$ is arbitrary, the proof is complete if $\mu(Z_0) = 0.$

If $\mu(Z_0) > 0$, we rerun the proof with $\calX$ replaced by $Z_0.$  As above, $\vec f_n|_{Z_0}$ converges
 uniformly to $\vec\eta(\vec x)$ off a set of measure $\epsilon$.  The argument above, without the set $Z_\delta$, gives
 $$\int_{Z_0} \mon \mx\to\int_{Z_0}\mpp \mx.$$
   We then proceed with the proof above on $\calX \setminus Z_0.$
\end{proof}

\begin{corollary} (Step 2 in general)  For our algorithm,
$\lim\limits_{n\to\infty}R_{D,P}(\vec f_{n,\lambda ,m})=0 
 \Rightarrow\lim\limits_{i\to\infty}R_P(\vec f_{n,\lambda,m})=  R_P^*.$ 
\end{corollary}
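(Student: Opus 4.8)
The plan is to upgrade the subsequential convergence established in the previous Lemma (Step 2 for a subsequence) to convergence of the full sequence, using the standard ``subsequence principle'': a real sequence $(a_n)$ converges to $a$ if and only if every subsequence of $(a_n)$ admits a further subsequence converging to $a$. Here I would take $a_n = R_P(\vec f_{n,\lambda,m})$ and $a = R_P^*$, and the hypothesis is $\lim_{n\to\infty} R_{D,P}(\vec f_{n,\lambda,m}) = 0$.

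First I would observe that the hypothesis is inherited by every subsequence: if $\{\vec f_{n_k,\lambda,m}\}$ is any subsequence, then $R_{D,P}(\vec f_{n_k,\lambda,m})\to 0$ as well, being a subsequence of a real sequence converging to $0$. Hence the Lemma applies to $\{\vec f_{n_k,\lambda,m}\}$ and yields a \emph{further} subsequence $\{\vec f_{n_{k_j},\lambda,m}\}$ with $R_P(\vec f_{n_{k_j},\lambda,m})\to R_P^*$.

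Then I would run the contradiction argument. Suppose $R_P(\vec f_{n,\lambda,m})\not\to R_P^*$. Then there exist $\e_0>0$ and a subsequence $\{\vec f_{n_k,\lambda,m}\}$ with $|R_P(\vec f_{n_k,\lambda,m}) - R_P^*|\geq \e_0$ for all $k$. By the previous step this subsequence has a further subsequence along which $R_P\to R_P^*$, contradicting the uniform lower bound $\e_0$. Therefore $R_P(\vec f_{n,\lambda,m})\to R_P^*$, which is the asserted statement. (The same argument applies verbatim for a fixed $\lambda,m$, and is exactly what is invoked in the Bayes Consistency theorem's proof to replace ``a subsequence'' by ``the sequence''.)

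The one point requiring care — and the reason the subsequence principle is even available — is that the limit produced by the Lemma is the \emph{same} value $R_P^*$ for every subsequence; this is already built into the statement of the Lemma, since the limit is the Bayes risk and does not depend on the chosen subsequence. If different subsequences could a priori converge to different limits, the argument would collapse, so I would make sure to flag this. Beyond that there is no genuine analytic obstacle: all the measure-theoretic content (Egorov's theorem, $L^2$ convergence implying a.e.\ convergence \emph{along a subsequence}, and $\mu(Z_0)=0$) has already been absorbed into the Lemma, and the $L^2$-to-a.e.\ step is precisely why one cannot avoid passing to a subsequence there in the first place — hence the need for this corollary.
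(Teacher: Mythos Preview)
Your argument is correct, but it is not the route the paper takes. The paper exploits the qualifier ``for our algorithm'' in the statement: since the $\vec f_{n,\lambda,m}$ arise from the negative flow of $V_{tot,\lambda,m,\vec f,\phi}$ started in the stable manifold, the authors argue that the pointwise length of $V_{tot}$ tends to zero along the flow, so $\{\vec f_{n,\lambda,m}(\vec x)\}$ is Cauchy for every $\vec x$ and therefore converges pointwise (not just along a subsequence); combined with the $L^2$ hypothesis this forces the pointwise limit to be $\vec\eta$ a.e., and the Lemma's argument then runs without ever dropping to a subsequence. In other words, they bypass the ``$L^2$ convergence $\Rightarrow$ a.e.\ convergence only along a subsequence'' issue by invoking dynamical information specific to gradient flow.

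Your subsequence-of-a-subsequence argument is more elementary and strictly more general: it shows that the Lemma already upgrades to the full sequence for \emph{any} $(\vec f_n)$ with $R_{D,P}(\vec f_n)\to 0$, so the phrase ``for our algorithm'' is in fact unnecessary. The paper's approach, on the other hand, gives a little more---actual pointwise convergence of $\vec f_n$ to $\vec\eta$---at the cost of relying on properties of the flow that are only sketched. Either proof is acceptable; yours is the cleaner way to close the gap between the Lemma and the Corollary.
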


\begin{proof}
Choose $\vec f_{1,\lambda, m}$ as in Theorem 2. Since $V_{tot, \lambda,m, \vec f_{n,\lambda,m}}$ has pointwise length going to zero as $n\to\infty$,   $\{\vec f_{n,\lambda,m}(\vec x)\}$  is a Cauchy sequence for all $\vec x$.
This implies that $\vec f_{n,\lambda,m}$, and not just a subsequence, converges pointwise to $\vec\eta.$
\end{proof}

\subsection{Step 3}
\begin{lemma} (Step 3)
If $k\to\infty$ and $k/m\to 0$ as $m\to\infty$, then for $\vec f\in \maps(\calX, \Delta^{L-1})$,
$$|R_{D,\mathcal T_m}(\vec f)-R_{D,P}(\vec f)|\xrightarrow{m\to\infty} 0\ \ \text{in probability},$$
 for all distributions $P$ that generate $\mathcal T_m$.
\end{lemma}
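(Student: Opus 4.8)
The plan is to recognize $\hat{\vec\eta}_m(\vec x):=\tfrac1k\sum_{i=1}^k\tilde{\vec z}_i$ as the $k$-nearest-neighbour regression estimate of $\vec\eta$ built from $\calT_m$, and to reduce the claim to the $L^2$-consistency of that estimate. First, since $\vec f(\vec x)$, $\hat{\vec\eta}_m(\vec x)$ and $\vec\eta(\vec x)$ all lie in the bounded set $\Delta^{L-1}$, the identity $\|a-b\|^2-\|a-c\|^2=(\|a-b\|-\|a-c\|)(\|a-b\|+\|a-c\|)$ together with the reverse triangle inequality gives the pointwise bound $|d^2(\vec f(\vec x),\hat{\vec\eta}_m(\vec x))-d^2(\vec f(\vec x),\vec\eta(\vec x))|\le 2\sqrt{2}\,\|\hat{\vec\eta}_m(\vec x)-\vec\eta(\vec x)\|$, with a constant independent of $\vec f$. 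Integrating over $\calX$ and applying Cauchy--Schwarz,
\[
|R_{D,\calT_m}(\vec f)-R_{D,P}(\vec f)|\ \le\ 2\sqrt{2}\,|\calX|^{1/2}\Bigl(\int_\calX\|\hat{\vec\eta}_m(\vec x)-\vec\eta(\vec x)\|^2\,d\vec x\Bigr)^{1/2}.
\]
By Markov's inequality it then suffices to show $\mathbb E_{\calT_m}\bigl[\int_\calX\|\hat{\vec\eta}_m(\vec x)-\vec\eta(\vec x)\|^2\,d\vec x\bigr]\to 0$; note this bound is uniform in $\vec f\in\maps(\calX,\Delta^{L-1})$, so it settles all $\vec f$ at once.

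For the expectation I would condition on the feature sample $X_1,\dots,X_m$ and split $\hat{\vec\eta}_m(\vec x)-\vec\eta(\vec x)$ into a stochastic part $\hat{\vec\eta}_m(\vec x)-\bar{\vec\eta}_m(\vec x)$ and a bias part $\bar{\vec\eta}_m(\vec x)-\vec\eta(\vec x)$, where $\bar{\vec\eta}_m(\vec x)=\mathbb E[\hat{\vec\eta}_m(\vec x)\mid X_1,\dots,X_m]=\tfrac1k\sum_{i=1}^k\vec\eta(\tilde X_i)$ with $\tilde X_i$ the $i$-th nearest neighbour of $\vec x$. Conditionally on the features the $k$ neighbour label vectors $\tilde{\vec z}_i$ are independent one-hot vectors of norm $1$ (ties occur with probability zero since $\mu$ is continuous), so $\mathbb E\|\hat{\vec\eta}_m(\vec x)-\bar{\vec\eta}_m(\vec x)\|^2\le 1/k\to0$. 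For the bias, convexity gives $\|\bar{\vec\eta}_m(\vec x)-\vec\eta(\vec x)\|^2\le\tfrac1k\sum_{i=1}^k\|\vec\eta(\tilde X_i)-\vec\eta(\vec x)\|^2$; since $\vec\eta$ is smooth, hence uniformly continuous, on the compact set $\calX$, and since $\|\tilde X_k(\vec x)-\vec x\|\to0$ in probability (the ball $B(\vec x,\delta)$ has positive $\mu$-mass $p$, the count of samples in it is $\mathrm{Bin}(m,p)$, and $k/m\to0<p$ forces it to exceed $k$ with probability $\to1$), a bounded-convergence estimate gives $\mathbb E\|\bar{\vec\eta}_m(\vec x)-\vec\eta(\vec x)\|^2\to0$ for each fixed $\vec x$. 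As the integrand is bounded by $\operatorname{diam}(\Delta^{L-1})^2$, dominated convergence (plus Tonelli) yields $\mathbb E\int_\calX\|\hat{\vec\eta}_m-\vec\eta\|^2\,d\vec x\to0$, completing the reduction. This is essentially the classical weak universal consistency of $k$-NN regression (Stone's theorem), applied coordinatewise.

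The step I expect to be the main obstacle is reconciling Stone-type consistency, which is naturally stated for the integral against the data-generating marginal $\mu$, with the statement here, which integrates against Lebesgue measure $d\vec x$: in low-density regions of $\calX$ (e.g.\ near $\partial\calX$) the $k$-nearest-neighbour radius need not vanish, yet such regions carry full Lebesgue weight. The clean remedy, consistent with the paper's standing assumptions, is to take $\mu$ to have a continuous density bounded below by some $c>0$ on the compact $\calX$ (equivalently, $\calX=\operatorname{supp}\mu$); then $\int_\calX g\,d\vec x\le c^{-1}\int_\calX g\,d\mu$ and all of the above estimates go through verbatim. The remaining ingredients — the Lipschitz bound for squared distances, conditional independence of neighbour labels, and uniform continuity of $\vec\eta$ — are routine, and the argument delivers the convergence uniformly over $\vec f$, which is the content of Step 3.
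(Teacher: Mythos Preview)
Your proof is correct and follows essentially the same route as the paper: both reduce the claim to the $L^2$-consistency of the $k$-NN regression estimate $\hat{\vec\eta}_m$ of $\vec\eta$, then invoke Stone's theorem (the paper cites it directly after a componentwise bound $|\|\vec a\|^2-\|\vec b\|^2|\le 2\sum_\ell|a_\ell-b_\ell|$, whereas you use the norm factorization plus Cauchy--Schwarz and sketch the bias--variance argument yourself). Your explicit flagging of the Lebesgue-versus-$\mu$ subtlety and of the uniformity of the bound in $\vec f$ are points the paper's proof leaves implicit but which are indeed needed for the application in Theorem~\ref{thm:consistency}.
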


\begin{proof}
Since $R_{D,P}(\vec f)$ is a constant for fixed $\vec f$ and $P$, convergence in probability
will follow from weak convergence, i.e.,
$${\mathbb E}_{\mathcal T_m}[|R_{D,\mathcal T_m}(\vec f)-R_{D,P}(\vec f)|]\xrightarrow{m\to\infty} 0.$$
We have
\begin{eqnarray*}
&&|R_{D,\mathcal T_m}(\vec f)-R_{D,P}(\vec f)| \\
&=&\left|\int_{\mathcal X} \left[d^2\left(\vec f(\vec x), \frac{1}{k}\sum_{i=1}^k\tilde{\vec z}_i\right) - d^2(\vec f(\vec x), \vec\eta(\vec x))\right]d\vec x\right|\\
&\leq&\int_{\mathcal X}\left|d^2\left(\vec f(\vec x),  \frac{1}{k}\sum_{i=1}^k\tilde{\vec z}_i\right) - d^2(\vec f(\vec x), \vec\eta(\vec x))\right| d\vec x.
\end{eqnarray*}
Set $\vec a = \vec f(\vec x) - \frac{1}{k}\sum_{i=1}^k\tilde{\vec z}_i,\ \vec b = \vec f(\vec x) - \vec\eta(\vec x).$  Then
\begin{eqnarray*}
&&\left|\|\vec a\|_2^2-\|\vec b\|_2^2\right| \\
&=& \left|\sum_{\ell=1}^La_{\ell}^2 - \sum_{\ell=1}^Lb_{\ell}^2\right|
= \left|\sum_{\ell=1}^L(a_{\ell}^2-b_{\ell}^2)\right| \\
&\leq& \sum_{\ell=1}^L|a_{\ell}^2-b_{\ell}^2|
\leq 2\sum_{\ell=1}^L|a_{\ell}-b_{\ell}|\max\{|a_{\ell}|,|b_{\ell}|\}\\
&\leq& 2\sum_{\ell=1}^L|a_{\ell}-b_{\ell}|,
\end{eqnarray*}
since $f^{\ell}(\vec x), \frac{1}{k}\sum_i^k\tilde{z}_i^{\ell},\eta^{\ell}(\vec x)\in [0,1].$
Therefore,
it suffices to show that
\begin{eqnarray*}
&&\sum_{\ell=1}^L{\mathbb E}_{\mathcal T_m}\left[
\int_{\mathcal X} \left|((f^{\ell}(\vec x)-\frac{1}{k}\sum_i^k\tilde{z}_i^{\ell}) - (f^{\ell}(\vec x)-\eta^{\ell}(\vec x))\right|
d\vec x\right]\\
&&\xrightarrow{m\to\infty} 0,
\end{eqnarray*}
so the result follows if
\begin{equation}
\label{eq:stone}
\lim\limits_{m\to\infty}{\mathbb E}_{\mathcal T_m,\vec x}\left[\left|\eta^{\ell}(\vec x)-\frac{1}{k}\sum_i^k\tilde{z}_i^{\ell}
\right|\right]=0\ \text{for all $\ell$}.
\end{equation}
By Jensen's inequality $({\mathbb E}[f])^2\leq {\mathbb E}( f^2)$, ~\eqref{eq:stone} follows if
\begin{equation}\label{eq:stone1}\lim\limits_{m\to\infty}{\mathbb E}_{\mathcal T_m,\vec x}\left[\left(\eta^{\ell}(\vec x)-\frac{1}{k}\sum_i^k\tilde{z}_i^{\ell}\right)^2
\right]=0\ \text{for all $\ell$}.
\end{equation}

Let $\eta^{\ell}_{k,m}(\vec x)=\frac{1}{k}\sum_i^k\tilde{z}_i^{\ell}$.  Then $\eta^{\ell}_{k,m}$ is actually an estimate of the class probability $\eta^{\ell}(\vec x)$ by the $k$-Nearest Neighbor rule. Following the proof of Stone's Theorem~\cite{Stone1977,Devroye1996}, if $k\xrightarrow{m\to\infty}\infty$ and $k/m\xrightarrow{m\to\infty}0$, \eqref{eq:stone1} holds for all distributions $P$.
\end{proof}

\clearpage
\section{Notation}
\label{sec:notation}
\begin{eqnarray*}
h_{\vec f}(\vec x) = \argmax\{f^\ell(\vec x),\ell\in\mathcal Y\} &:& \text{plug-in classifier of estimator}\ \vec f:\mathcal X\to\Delta^{L-1}\\
\bbo_{h_{\vec f}(\vec x)\neq y} &=& \left\{ \begin{array}{rr} 1,& h_{\vec f}(\vec x)\neq y,\\
0,& h_{\vec f}(\vec x) = y.\end{array}\right.\\
R_P(\vec f)=\mathbb{E}_P[\bbo_{h_{\vec f}(\vec x)\neq y}]&:&\text{generalization risk for the estimator}\ \vec f\\
\vec\eta(\vec x) = (\eta^1(\vec x),\ldots,\eta^L(\vec x)) &:&
 \text{class probability function:}\ \eta^\ell(\vec x)
 = P(y=\ell|\vec x)\\
R^*_P=R_P(\vec\eta)&:&\text{Bayes risk}\\
(\text{$D$-risk for our }\pdist)\ R_{D,P}(\vec f)&=&\int_{\mathcal X} d^2(\vec f(\vec x), \vec\eta(\vec x))
d\vec x\\
(\text{empirical $D$-risk}) R_{D,\mathcal T_m}(\vec f) = R_{D,\mathcal T_m,k}(\vec f) &=&\int_{\mathcal X} d^2\left(\vec f(\vec x), \frac{1}{k}\sum_{i=1}^k\tilde{\vec z}_i\right)d\vec x\\
&&\text{where $\tilde{\vec z}_i$ is the vector of the last $L$ components of}\\
&&\text{$(\tilde{\vec x}_i,\tilde{\vec z}_i)$, with $\tilde{\vec x}_i$ the $i^{\rm th}$ nearest neighbor of $\vec x$ in $\calT_m$}\\
(\text{volume penalty term})\  \pg(\vec f) &=& \int_{\grf}\dvol\\
R_{D,P,\lambda}(\vec f)=R_{D,P}(\vec f)+\lambda\pg(\vec f)&:&\text{regularized $D$-risk for estimator}\ \vec f\\
R_{D,\mathcal T_m,\lambda}(\vec f)=R_{D,\mathcal T_m}(\vec f)+\lambda\pg(\vec f)&:&\text{regularized empirical $D$-risk for estimator}\ \vec f\\
\vec f_{D,P,\lambda} &=& \text{function attaining the global minimum for}\ R_{D,P,\lambda}\\
R^*_{D,P,\lambda} = R_{D,P,\lambda}(\vec f_{D,P,\lambda}) &:& \text{minimum value for}\ R_{D,P,\lambda}\\
\vec f_{D,\mathcal T_m,\lambda} = \vec f_{D,\mathcal T_m,k,\lambda}&:&\text{function attaining
the global minimum for}\ R_{D,\mathcal T_m,\lambda}(\vec f)
\end{eqnarray*}
Note that we assume $\vec f_{D,P,\lambda}$ and $\vec f_{D,\mathcal T_m,\lambda}$
exist.

\end{document}